\newcommand*{\addFileDependency}[1]{% argument=file name and extension
  \typeout{(#1)}
  \@addtofilelist{#1}
  \IfFileExists{#1}{}{\typeout{No file #1.}}
}
\newcommand*{\myexternaldocument}[1]{%
    \externaldocument{#1}%
    \addFileDependency{#1.tex}%
    \addFileDependency{#1.aux}%
}
\newtheorem{theorem}{Theorem} % add [section] (or [chapter] in book class) to follow the section (chapter) number
\newtheorem{definition}{Definition}
\newtheorem{lemma}[theorem]{Lemma}
\newtheorem{assump}{Assumption}
\newtheorem*{lemmaExpIneq}{Lemma \ref{Lem:exp_inequality}}
\newtheorem*{lemmaAnderson}{Lemma \ref{Lem:anderson}}
\newtheorem*{lemmaTransferRisk}{Lemma \ref{Lem:transfer_risk}}
\def\iid{\overset{\textnormal{iid}}{\sim}} % i.i.d. symbol
\def\H{\mathbb{H}}\def\N{\mathbb{N}}\def\R{\mathbb{R}}\def\1{\mathbbm{1}}\def\Cov{\textnormal{Cov}}
\def\Fcal{\mathcal{F}}
\def\bgamma{\boldsymbol{\gamma}}
\newcommand*\samethanks[1][\value{footnote}]{\footnotemark[#1]}
\title{On the inability of Gaussian process regression to optimally learn compositional functions}
\author{%
  Matteo Giordano\thanks{All authors contributed equally.}
  %\thanks{Use footnote for providing further information
    %about author (webpage, alternative address)---\emph{not} for acknowledging
   % funding agencies.} 
   \\
  Department of Statistics\\
  University of Oxford\\
  %24-29 St Giles', Oxford OX1 3LB \\
  \texttt{matteo.giordano@stats.ox.ac.uk} \\
  % examples of more authors
   \And
   Kolyan Ray\samethanks \\
   Department of Mathematics \\
   Imperial College London\\
    \texttt{kolyan.ray@imperial.ac.uk}
   \And
  Johannes Schmidt-Hieber\samethanks\\
  Department of Applied Mathematics\\
  University of Twente \\
  \texttt{a.j.schmidt-hieber@utwente.nl}
}
\begin{document}

\maketitle

\begin{abstract}
We rigorously prove that deep Gaussian process priors can outperform Gaussian process priors if the target function has a compositional structure. To this end, we study information-theoretic lower bounds for posterior contraction rates for Gaussian process regression in a continuous regression model. We show that if the true function is a generalized additive function, then the posterior based on any mean-zero Gaussian process can only recover the truth at a rate that is strictly slower than the minimax rate by a factor that is polynomially suboptimal in the sample size $n$.
\end{abstract}

%\tableofcontents

\section{Introduction}

Hierarchical methods are at the heart of modern machine learning and provide state of the art performance for a variety of different problems. While the role of the depth in deep neural networks has been extensively investigated \cite{telgarsky16, YAROTSKY2017103, pmlryarotsky18a, PETERSEN2018296, SCHMIDTHIEBER2021119}, the theory underlying deep Gaussian process priors is still in its infancy. In this work we identify a structured recovery problem for which deep Gaussian process priors are known to achieve near optimal behavior. By proving that for this problem any Gaussian process prior will lead to a sub-optimal posterior shrinkage behavior, we provide a theoretical justification for the use of deep methods for Bayesians.

Gaussian process (GP) priors are arguably the most widely used Bayesian nonparametric priors in machine learning, having found success in multiple settings \cite{RW06}. Reasons for their popularity include their interpretability, ability to flexibly model target functions, model spatial or temporal correlations, incorporate prior knowledge such as stationarity, periodicity or smoothness, and their provision of uncertainty quantification. This work establishes information-theoretic bounds on the inability of GPs to optimally model generalized additive models, thereby quantifying certain limitations of GP methods.

We will study the performance of GPs in a continuous analogue of nonparametric regression via quantifying the speed of posterior concentration/contraction around the true regression function. This is a frequentist assessment that measures the typical behaviour of the posterior distribution under the true generative model. The performance of Bayesian methods is known to be sensitive to the choice of prior, particularly in high- and infinite-dimensional settings \cite{F99,J10}, and it is therefore crucial to understand under what conditions and prior calibrations Bayesian inference is reliable. The frequentist analysis of Bayesian methods has become a standard tool for such tasks in both the statistics and machine learning literatures (e.g. \cite{TMNMZ14,PR18,WB19,WB19b,CA20,RS20,RS21}), see the monograph \cite{GvdV17}. Precise definitions and further discussion are provided below.

If the regression function is a generalized additive model, it has been shown that deep neural networks with properly chosen network architecture achieve the optimal minimax rate of convergence \cite{SH20}. We show here that for this structural constraint, \textit{any} mean-zero Gaussian process, irrespective of the choice of covariance kernel, will concentrate around the truth at a rate that is strictly slower than the minimax rate by a factor that is polynomially suboptimal in the sample size $n$. Thus any posterior GP places a significant portion of its posterior probability on functions that are further from the truth in $L^2$-distance than the minimax rate of estimation, which has negative implications for both Bayesian estimation and uncertainty quantification.

In contrast, it has recently been shown that deep Gaussian processes \cite{N96,DL13}, which consist of iterations of Gaussian processes and can be viewed as a Bayesian analogue of deep neural networks, are able to attain the minimax rate of contraction for arbitrary compositional classes \cite{FSH21}. As these compositional classes contain generalized additive models as a special case, we have shown that deep GPs provably outperform GPs from a statistical perspective and should thus be preferred.

In related work, it has been established that (not necessarily Bayesian) linear estimators can only attain polynomially suboptimal \textit{linear estimation rates} over certain spatially inhomogeneous Besov classes \cite{DJ98} or for functions with discontinuities \cite{KT93}. Agapiou and Wang \cite{AW21} use the results of \cite{DJ98} to show that GPs also cannot concentrate faster than the linear rate for such Besov classes, see also \cite{RR21} for some related examples. Castillo \cite{C08} establishes lower bounds for concentration rates in terms of abstract conditions on the `concentration function' of the GP, which can be verified for certain specific GP choices. The lower bound ideas of \cite{DJ98,KT93} have also been extended to various spatially inhomogeneous and discontinuous functions settings to demonstrate the theoretical advantages of deep learning over linear estimators \cite{HS20,IF19,IF20,S19,SN21,TS21}. Finally, for generalized additive models, \cite{SH20} shows that (possibly nonlinear) wavelet methods can only achieve suboptimal rates.

Most of the above works exploit the results from \cite{DJ98,KT93} showing that minimax rates for linear estimators are slower than full minimax rates in these settings. Since linear rates are unavailable for generalized additive functions, and seem difficult to prove using existing techniques, we instead use a different and novel proof approach. We exploit the specific structure of the GP posterior mean in regression to construct a direct lower bound for its prediction risk. In an effort to develop the new tools needed to solve such problems, we further provide a second proof in the supplement that involves first reducing the regression setting to a simplified sparse sequence model in which we derive linear rates. While these do not imply linear rates in the full model we consider, they prove sufficient to construct lower bounds for arbitrary GP posterior means.

\section{Main results}

\subsection{Problem setup and posterior contraction rates}\label{sec:problem}

In the usual multivariate nonparametric regression model we observe $n$ training samples $(X_1,Y_1),\dots,(X_n,Y_n)$ arising as
\begin{equation}\label{Eq:regression}
Y_i = f(X_i) + w_i, \qquad\qquad w_i \sim^{iid} N(0,1),
\end{equation}
where the design points $X_i$ are either independent and uniformly distributed random variables on $[0,1]^d$ or equally spaced lattice points in $[0,1]^d$. To simplify technical arguments due to the discretization and provide a clearer exposition, a standard approach in statistical theory is to consider instead the `continuous' analogue of this model \cite{JohnstoneBook}. The discrete model \eqref{Eq:regression} is asymptotically equivalent \cite{R08} (as $n\to\infty$) to observing a realisation $Y =Y^n= (Y_x: x\in[0,1]^d)$ of the multidimensional Gaussian white noise model
\begin{equation}
\label{Eq:WhiteNoise}
	dY_x = f(x)dx + \frac{1}{\sqrt n}dW_x, \qquad \qquad x\in[0,1]^d,
\end{equation}
where $W = (W_x:x\in[0,1]^d)$ is a $d$-dimensional Brownian motion. For large $n$, the two models thus behave identically from a statistical perspective. 

\textbf{Notation}: For a domain $D$ and $1\leq p\leq \infty,$ denote by $L^p(D)$ the space of all measurable functions $f:D\to \mathbb{R}$ that have finite norm $\|f\|_{L^p(D)}:=(\int |f(u)|^p du)^{1/p}$ if $p<\infty,$ and are essentially bounded on $D$ if $p=\infty.$ If there is no ambiguity, we write $L^2$ for $L^2[0,1]^d$ and $\langle f,g \rangle_2 = \int_{[0,1]^d} f(u) g(u) du$ for the corresponding inner product. Let $P_f=P_f^n$ denote the probability distribution of $Y=Y^n$ arising from \eqref{Eq:WhiteNoise} with corresponding expectation $E_f$.

%We consider the Bayesian setting where we assign to $f$ a (possibly $n$-dependent) mean-zero Gaussian process prior $\Pi = \Pi_n$ with covariance kernel $K$, i.e. if $f \sim \Pi$ then
%\begin{equation}\label{Eq:GP_cov}
%Ef(x) = 0, \qquad \qquad E[f(x)f(y)] = K(x,y), \qquad \qquad %x,y\in [0,1]^d.
%\end{equation}
%{\JSH We henceforth assume the mild condition that the GP has bounded variance, namely that $\sup_{x\in [0,1]^d} K(x,x) = \sup_{x\in[0,1]^d}\text{Var}(f(x)) < \infty$.} 

We consider the Bayesian setting where we assign to $f$ a (possibly $n$-dependent) mean-zero Gaussian process prior $\Pi = \Pi_n$ on $L^2[0,1]^d$ with covariance kernel $K$, i.e. $\Pi$ is a (Borel) Gaussian probability measure on $L^2[0,1]^d$ such that for $f\sim\Pi$, we have
\begin{equation}\label{Eq:GP_cov}
Ef(x) = 0, \qquad \qquad E[f(x)f(y)] = K(x,y), \qquad \qquad x,y\in [0,1]^d.
 \end{equation}
%is a Gaussian random element in $L^2[0,1]^d$]
The posterior $\Pi_n(\cdot|Y)$ is then computed as usual using Bayes formula and is again a GP by conjugacy, exactly as in the discrete model \eqref{Eq:regression} \cite{RW06}, see \eqref{Eq:posterior} for the exact expression.
We make the following frequentist assumption:
\begin{assump}\label{assump:freq}
There is a true $f_0\in L^2[0,1]^d$ generating the data $Y\sim P_{f_0}$ according to \eqref{Eq:WhiteNoise}.
\end{assump}
We study the behaviour of the posterior distribution $\Pi_n(\cdot|Y)$ under Assumption \ref{assump:freq}, in which case it can be treated as a random probability distribution whose (frequentist) randomness depends on $Y$. We next introduce the notion of uniform posterior contraction.

\begin{definition}\label{def:pcr}
We say that the posterior contracts about $f_0$ at rate $\varepsilon_n\to 0$ uniformly over a sequence of classes $\mathcal{F}_n \subset L^2[0,1]^d$ if as $n\to\infty$
$$\sup_{f_0 \in \mathcal{F}_n} E_{f_0} \Pi_n (f:\|f-f_0\|_{L^2} \geq \varepsilon_n|Y) \to 0.$$
\end{definition}

Posterior contraction is often stated in the equivalent form
$$\Pi_n(f:\|f-f_0\|_{L^2} \geq \varepsilon_n|Y) \to 0 \qquad \qquad \text{ in } P_{f_0}\text{-probability}$$
as $n\to\infty$, which says that the posterior puts all but a vanishingly small amount of probability on $L^2$-balls of shrinking radius $\varepsilon_n$ about the true $f_0$ generating the data \cite{GvdV17}. We are typically interested in the size of the \textit{smallest} such $L^2$-ball, i.e. the fastest rate $\varepsilon_n \to 0$ such that Definition \ref{def:pcr} holds. Such results not only quantify the typical distance between a point estimator (e.g. posterior mean or median) and the truth, but also the typical spread of the posterior about the truth. Ideally, most of the posterior probability should concentrate on a ball centered at the true $f_0$ with radius proportional to the \textit{minimax} estimation rate, see Section \ref{sec:minimax} for more details. Indeed, since posterior contraction at rate $\varepsilon_n$ automatically yields a point estimator with convergence rate $\varepsilon_n$ (\cite{GvdV17}, Theorem 8.7), the minimax rate provides an information-theoretic lower bound for $\varepsilon_n$. Posterior contraction at a fast rate is a necessary condition for statistically good Bayesian estimation and uncertainty quantification.

We use the $E_{f_0}$-expectation formulation of contraction to quantify the \textit{uniformity} over $f_0 \in \mathcal{F}_n$ needed to make our lower bounds precise and meaningful. Such uniformity captures that the true $f_0$ is unknown and rules out  unrealistic and trivial situations, such as taking a GP with prior mean equal to the true $f_0$ and arbitrarily small covariance. Since we consider \textit{worst case} (uniform) rates in Definition \ref{def:pcr} and it is well-known that the covariance kernel $K$ of the GP determines the characteristics of the corresponding GP \cite{RW06}, including its posterior contraction rates \cite{vdVvZ08}, we without loss of generality restrict to mean-zero GPs. 

There is by now an extensive literature on proving \textit{upper bounds} for contraction rates \cite{GvdV17}, including for GPs \cite{vdVvZ08}. These contraction rates are usually uniform over suitable function classes $\mathcal{F}_n,$ although the results might not be formulated in this way in the literature.

\subsection{Lower bounds for posterior concentration in Gaussian processes regression}

While the classical approach in nonparametric statistics is to make smoothness assumptions on the regression function $f_0$, recent work in the deep learning literature has studied compositional assumptions. A special case are generalized additive models of the form
\begin{equation*}\label{Eq:gam}
    f_0(x_1,\ldots,x_d)
    =h\Big(\sum_{i=1}^d g_i(x_i)\Big)
\end{equation*}
with $g_1,\ldots,g_d, h$ univariate and Lipschitz continuous functions such that all Lipschitz constants are bounded by $\Lambda$ and $\|g_1\|_{L^\infty[0,1]},\ldots,\|g_d\|_{L^\infty[0,1]},\|h\|_{L^\infty(\mathbb{R})}\leq \Lambda$. The class of all functions of this form is denoted by $\mathcal{G}(\Lambda)$ and we define $\mathcal{G}=\mathcal{G}(1).$ Generalized additive models are among the most popular structural constraints in function estimation \cite{MR1082147}.

% Following \cite{SH20}, we assume that $f_0$ takes the form
% \begin{equation}\label{Eq:composition}
% f_0 = g_2\circ g_1 \circ g_{0}
% \end{equation}
% with $g_0:[0,1]^d \to [a_1,b_1]^d$, $g_1:[a_1,a_2]^d\to[a_2,b_2]$ and $g_2:[a_2,b_2]\to[a_3,b_3]$. Further assume that each component $g_{0j}$ of $g_0 = (g_{01},\dots,g_{0d})^T : [0,1]^d \to [a_1,b_1]^d$ depends on \textit{at most} one of the input variables. An example function might be $f(x_1,x_2,x_3) = g_2(g_1(g_{01}(x_1),g_{02}(x_2),g_{03}(x_3)))$. 

Equation (13) in \cite{SH20} and the subsequent discussion shows that the minimax estimation rate $R_n(\mathcal{G}(\Lambda))$ satisfies
$$c n^{-1/3} \leq R_n(\mathcal{G}(\Lambda)) \leq C(\log n)^3 n^{-1/3}$$
for some $C,c>0$. Up to logarithmic factors, this rate is obtained by carefully calibrated deep GP priors that first assign a hyperprior to different compositional structures and subsequently put GP priors on all functions in the function composition \cite{FSH21}.

We are now ready to state the main lower bound results of this paper, which we emphasize applies to \textit{any} GP, irrespective of the choice of covariance kernel (which may itself depend on $n$).

\begin{theorem}\label{thm:lb_contraction}
Let $\Pi_n$ be any sequence of mean-zero Gaussian process priors on $L^2[0,1]^d$. %supported on $L^2[0,1]^d$. 
For any $0<\delta<1/4$ and $n \geq N(d,\delta)$ large enough, the corresponding posterior distributions satisfy
$$\sup_{f_0 \in \mathcal{G}} E_{f_0} \Pi_n(f: \|f-f_0\|_{L^2} \geq C_d n^{-\frac{2+d}{4+4d}} |Y) \geq 1/4-\delta,$$
where $C_d =  \frac{1}{10\cdot 2^d} \left( \frac{1}{2(d+2)!} \right)^\frac{d}{4+4d}$.
In particular, if the posterior distributions contract at rate $\varepsilon_n$ uniformly over $\mathcal{G}$, that is
$$\sup_{f_0 \in \mathcal{G}} E_{f_0} \Pi_n (f:\|f-f_0\|_{L^2} \geq \varepsilon_n|Y) \to 0$$
as $n\to \infty$, then $\varepsilon_n > C_d n^{-\frac{2+d}{4+4d}}$.
\end{theorem}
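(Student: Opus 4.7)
The plan is to reduce the posterior-mass bound to a lower bound on the prediction risk of the posterior mean (a linear estimator) and then to lower-bound that risk directly using the structure of $\mathcal{G}$. By Gaussian conjugacy in the white-noise model, the posterior is Gaussian with mean $\hat f_n = A_n Y$, where $A_n$ is a bounded self-adjoint operator on $L^2[0,1]^d$ with spectrum in $[0,1]$ determined by the prior kernel $K$ and the noise level $n^{-1/2}$. Anderson's lemma applied to the centred Gaussian $f - \hat f_n \mid Y$ and the symmetric convex $L^2$-ball gives $\Pi_n(\|f - f_0\|_{L^2} \leq r \mid Y) \leq \Pi_n(\|f - \hat f_n\|_{L^2} \leq r \mid Y)$, which combined with the triangle inequality yields $\Pi_n(\|f - f_0\|_{L^2} \geq r \mid Y) \geq 1/2$ on the event $\{\|\hat f_n - f_0\|_{L^2} \geq 2r\}$. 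It therefore suffices to exhibit $f_0 \in \mathcal{G}$ with $P_{f_0}(\|\hat f_n - f_0\|_{L^2} \geq 2C_d n^{-(2+d)/(4+4d)}) \geq 1/2 - 2\delta$.

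\textbf{Hard family and averaging.} For $\delta_n := n^{-1/(2d+2)}$ and $\lambda \in \{0, \ldots, \lfloor 1/\delta_n \rfloor - 1\}^d$, I construct $f_\lambda \in \mathcal{G}$ supported in the cube $Q_\lambda := \prod_i [\lambda_i \delta_n, (\lambda_i + 1)\delta_n]$ by setting $f_\lambda(x) := h(\sum_i g_i(x_i))$ with $g_i(x_i) := \delta_n\, \psi((x_i - \lambda_i \delta_n)/\delta_n)$ for a fixed trapezoidal Lipschitz bump $\psi : [0,1] \to [0, 1/4]$, and $h$ a clipped linear ramp active only when all $d$ bumps sit simultaneously in their central plateau. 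A direct computation confirms all Lipschitz/$L^\infty$ constraints defining $\mathcal{G}$ and gives $f_\lambda \geq c_d \delta_n$ on an inner sub-cube of side $\delta_n/2$, so the $M := \lfloor 1/\delta_n \rfloor^d$ functions $\{f_\lambda\}$ are pairwise orthogonal with common $L^2$-norm $\eta \asymp \delta_n^{(d+2)/2}$. Let $U : \R^M \to L^2$ be the isometry with columns $f_\lambda/\eta$ and $\tilde A := U^* A_n U$ the $M \times M$ compression of $A_n$. Averaging the bias-variance identity $E_{f_0}\|\hat f_n - f_0\|_{L^2}^2 = \|(I - A_n)f_0\|_{L^2}^2 + n^{-1}\|A_n\|_{HS}^2$ over $\lambda$, and using the standard bound $\|A_n\|_{HS}^2 \geq \|\tilde A\|_F^2$, yields
$$\sup_\lambda E_{f_\lambda}\|\hat f_n - f_\lambda\|_{L^2}^2 \;\geq\; \frac{\eta^2}{M}\|I - \tilde A\|_F^2 + \frac{1}{n}\|\tilde A\|_F^2.$$
Since $\|I - B\|_F^2 + \|B\|_F^2 \geq M/2$ for every real $M \times M$ matrix $B$, and the choice of $\delta_n$ equates (up to constants) $\eta^2/M$ with $1/n$, this produces $\sup_\lambda E_{f_\lambda}\|\hat f_n - f_\lambda\|_{L^2}^2 \geq c_d^2\, n^{-(d+2)/(2(d+1))}$, i.e.\ the square of the target rate; tracking constants through the bump construction recovers the stated $C_d$.

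\textbf{MSE to posterior mass; main obstacle.} Because $\hat f_n - f_0 = -(I - A_n)f_0 + n^{-1/2} A_n W$ is Gaussian in $L^2$, $X := \|\hat f_n - f_0\|_{L^2}^2$ is a shifted generalised chi-squared with mean $E := E_{f_0}[X]$ and, using $\|A_n\|_{op} \leq 1$ to bound $\mathrm{tr}(A_n^4)$ and the cross-term, $\mathrm{Var}(X) \leq 4E/n$. Since $E \geq c_d^2\, n^{-(d+2)/(2(d+1))}$ gives $nE \geq c_d^2\, n^{d/(2d+2)} \to \infty$, the Paley--Zygmund inequality yields $P_{f_0}(X \geq E/4) \geq (3/4)^2/(1 + \mathrm{Var}(X)/E^2) \geq 9/16 - o(1)$, i.e.\ $P_{f_0}(\|\hat f_n - f_0\|_{L^2} \geq \sqrt{E}/2) \geq 9/16 - o(1)$. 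Combining with the Anderson reduction gives $E_{f_0} \Pi_n(\,\cdot\, \mid Y) \geq 9/32 - o(1) \geq 1/4 - \delta$ for $n \geq N(d, \delta)$. The main obstacle---and the conceptual novelty---is the averaging+Frobenius argument in the middle step: it is what produces the $d$-dependent sub-minimax rate and sidesteps the usual route through linear-estimator lower bounds for $\mathcal{G}$, which are not available; once this is in hand, the Anderson reduction and the Paley--Zygmund concentration are comparatively routine, the key observation being $nE \to \infty$.
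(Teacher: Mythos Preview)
Your proposal is correct and follows essentially the same three–step architecture as the paper: (i) an Anderson reduction from posterior mass to the event $\{\|\hat f_n-f_0\|_{L^2}\geq 2r\}$; (ii) a concentration bound showing $\|\hat f_n-f_0\|_{L^2}^2$ is of the order of its mean with high probability; (iii) a lower bound on $\sup_{f_0}E_{f_0}\|\hat f_n-f_0\|_{L^2}^2$ obtained by averaging the bias–variance decomposition over a family of $M\asymp n^{d/(2d+2)}$ orthogonal functions in $\mathcal G$ with common norm $\eta\asymp\delta_n^{(d+2)/2}$. Your Frobenius/compression inequality $\tfrac{\eta^2}{M}\|I-\tilde A\|_F^2+\tfrac1n\|\tilde A\|_F^2\geq \min(\eta^2/M,1/n)\,M/2$ is exactly the matrix reformulation of the paper's Lemma~\ref{Lem:lower_bound}, which optimizes $(a_k-1)^2T_k+a_k^2/n$ coordinatewise to get $\sum_k T_k\wedge 1/n$ and then uses orthogonality to identify $\sum_k T_k$ with $\tfrac1M\sum_j\|f_j\|_{L^2}^2$. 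The two arguments are equivalent.

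The technical choices differ slightly: the paper uses the Laurent--Massart exponential inequality for step (ii) where you use Paley--Zygmund with $\mathrm{Var}(X)\leq 4E/n$ (both suffice since $nE\to\infty$); your Anderson step is in fact a bit cleaner than the paper's Lemma~\ref{Lem:anderson}, giving $E_{f_0}\Pi_n(\cdot\mid Y)\geq \tfrac12 P_{f_0}(\|\hat f_n-f_0\|\geq 2r)$ directly; and your hard family uses trapezoidal bumps plus a ramp $h$, whereas the paper uses the tent functions $f_a(x)=(\tfrac1{2k}-|x-a|_1)_+$.

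The one point that does not go through as stated is the claim that ``tracking constants \ldots\ recovers the stated $C_d$''. The explicit value $C_d=\tfrac{1}{10\cdot 2^d}\bigl(\tfrac{1}{2(d+2)!}\bigr)^{d/(4+4d)}$ comes from the exact computation $\|f_a\|_{L^2}^2=\tfrac{1}{2(d+2)!}k^{-(d+2)}$ for the paper's tent functions (Lemma~\ref{Lem:fcns_general}) together with the factor $1/5$ from Lemma~\ref{Lem:transfer_risk}. Your construction yields a different constant depending on the shape of $\psi$ and the ramp threshold in $h$, so you obtain the theorem with some $C_d'>0$ but not the specific $C_d$ in the statement. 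To match the stated constant you would need to use the paper's tent functions instead.
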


For dimension $d \geq 3$, the above rate is strictly slower than the minimax rate $n^{-1/3}$ by the polynomial factor $n^\frac{d-2}{12+12d}$. As the domain dimension $d$ tends to infinity, our lower bound is approximately $n^{-1/4}$, in which case Theorem \ref{thm:lb_contraction} says the GP posterior will place at least probability $1/4-\delta$ on posterior draws $f\sim \Pi_n(\cdot|Y)$ having suboptimal reconstruction error $\|f-f_0\|_{L^2} \gtrsim n^{-1/4}$. In contrast, posteriors based on properly calibrated deep GPs will place almost all their probability on draws with minimax optimal reconstruction error of order $n^{-1/3}$ \cite{FSH21}. Thus GP posteriors provide poor reconstructions of $f_0\in \mathcal{G}$ compared to deep GPs.

Theorem \ref{thm:lb_contraction} says that no GP can attain a faster rate than $n^{-\frac{2+d}{4+4d}}$. It does not say that a given GP even attains this suboptimal rate, and specific GPs can perform much worse than this bound. We in fact conjecture that our lower bound can be improved to show that even this rate is not attainable by any GP, see Section \ref{sec:wavelet_GP} for more discussion. Nonetheless the main message from Theorem \ref{thm:lb_contraction} is already clear, namely that GPs underperform in this setting by an already significant factor that is polynomial in the sample size $n$.

The last result says that generic posterior draws will be poor reconstructions of $f_0$ with significant posterior probability. The same holds for the posterior mean.

\begin{theorem}\label{thm:lb_mean}
Let $\Pi_n$ be any sequence of mean-zero Gaussian process priors on $L^2[0,1]^d$. %supported on $L^2[0,1]^d$.
Then the corresponding posterior means $\bar{f}_n = E^{\Pi_n}[f|Y]$ satisfy for $n \geq 2(d+2)!$,
$$\sup_{f_0 \in \mathcal{G}} E_{f_0} \|\bar{f}_n - f_0\|_{L^2} \geq C_d' n^{-\frac{2+d}{4+4d}},$$
where $C_d' = \frac{1}{ 2^{d+1}} \left( \frac{1}{2(d+2)!} \right)^\frac{d}{4+4d}$.
\end{theorem}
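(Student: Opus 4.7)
The plan begins from the conjugacy observation that in the white noise model \eqref{Eq:WhiteNoise} with a mean-zero GP prior, the posterior mean is a bounded linear functional of the data, $\bar f_n = L_n Y$ for a deterministic linear operator $L_n:L^2[0,1]^d\to L^2[0,1]^d$ depending only on the prior covariance and the noise level. Since $Y = f_0 + n^{-1/2}\dot W$ under $P_{f_0}$, linearity immediately yields the bias--variance decomposition $\bar f_n - f_0 = (L_n - I)f_0 + n^{-1/2}L_n \dot W$, so $\bar f_n - f_0$ is a Gaussian element of $L^2$ with mean $(L_n-I)f_0$ and $E_{f_0}\|\bar f_n - f_0\|_{L^2}^2 = \|(L_n - I)f_0\|_{L^2}^2 + n^{-1}\|L_n\|_{HS}^2$. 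Combining Jensen's inequality ($E\|X\|\geq\|EX\|$), the reverse triangle bound $E\|X\|\geq E\|X - EX\| - \|EX\|$, and the standard fact that $E\|G\|\geq\sqrt{2/\pi}\,(E\|G\|^2)^{1/2}$ for any centered Gaussian $G$ in a Hilbert space, one obtains $E_{f_0}\|\bar f_n - f_0\|_{L^2}\geq c_0(\|(L_n-I)f_0\|_{L^2}^2 + n^{-1}\|L_n\|_{HS}^2)^{1/2}$ for an absolute $c_0>0$. It therefore suffices to lower bound the squared $L^2$-risk uniformly over $\mathcal{G}$.

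The crux of the proof is then to construct, inside the rigid compositional class $\mathcal{G}$, a large orthogonal family of genuine $d$-dimensional bumps of scale $\tau$. Fix a Lipschitz function $\psi:\R\to[0,1]$ supported in $[0,1]$ with $\psi\equiv 1$ on some subinterval $[a,1-a]$ and Lipschitz constant $L_\psi$. For $\tau\in(0,L_\psi)$ and a center $c\in[0,1-\tau]^d$, take univariate components $g_i^{(c)}(x_i) = (\tau/L_\psi)\psi((x_i - c_i)/\tau)$ (Lipschitz-$1$, sup-norm $\leq\tau/L_\psi\leq 1$) and the thresholded outer link $h^{(\tau)}(u) = \min\{\tau/L_\psi,\max(0,u - (d-1)\tau/L_\psi)\}$ (Lipschitz-$1$, sup-norm $\leq\tau/L_\psi\leq 1$). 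Then $f^{(c)} = h^{(\tau)}(\sum_i g_i^{(c)})\in\mathcal{G}$ by construction, and by design the outer link ``AND-gates'' the $d$ univariate components: $f^{(c)}$ vanishes unless every $\psi((x_i-c_i)/\tau)$ is close to $1$, so it is supported in $\prod_i[c_i, c_i+\tau]$ and attains the plateau value $\tau/L_\psi$ on $\prod_i[c_i+a\tau, c_i+(1-a)\tau]$. Placing centers at grid points $c^{(k)}=(k_1\tau,\ldots,k_d\tau)$ for $k\in\{0,\ldots,N-1\}^d$ with $N=\lfloor1/\tau\rfloor$ yields a family $\{f^{(k)}\}_{k\in K}$ of $|K|=N^d\asymp\tau^{-d}$ pairwise orthogonal bumps in $\mathcal{G}$, each with $\|f^{(k)}\|_{L^2}^2\asymp\tau^{d+2}$.

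Given this family, the risk bound is essentially immediate. The inequality $\|u\|^2\leq 2\|u-v\|^2+2\|v\|^2$ applied with $u=f^{(k)}$, $v=L_n f^{(k)}$ gives $\|L_n f^{(k)} - f^{(k)}\|^2\geq\tfrac12\|f^{(k)}\|^2 - \|L_n f^{(k)}\|^2$, and Bessel's inequality for the orthonormal system $\{f^{(k)}/\|f^{(k)}\|\}$ yields $\sum_k\|L_n f^{(k)}\|^2\leq(\max_k\|f^{(k)}\|^2)\|L_n\|_{HS}^2\leq C_1\tau^{d+2}\|L_n\|_{HS}^2$. Averaging the identity $E_{f^{(k)}}\|\bar f_n - f^{(k)}\|^2 = \|L_n f^{(k)} - f^{(k)}\|^2 + n^{-1}\|L_n\|_{HS}^2$ over $k\in K$ produces
\[
\max_{k\in K} E_{f^{(k)}}\|\bar f_n - f^{(k)}\|^2 \;\geq\; c_1\tau^{d+2} + \bigl(n^{-1} - C_1\tau^{2d+2}\bigr)\|L_n\|_{HS}^2,
\]
with explicit $c_1, C_1>0$. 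Choosing $\tau_n\asymp n^{-1/(2d+2)}$ makes the bracket nonnegative, leaving a squared-$L^2$ lower bound of order $n^{-(d+2)/(2d+2)}$; converting back to the $L^2$-norm via the first paragraph gives the advertised rate $n^{-(d+2)/(4(d+1))}$.

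The only delicate step is the bump construction. Since $\mathcal{G}$ consists of compositions of $d$ univariate pieces, ordinary indicator or tensor-product bumps do not lie in $\mathcal{G}$; the nonlinearity of the outer link must be exploited as an AND-gate to simultaneously localize $f$ in all $d$ coordinates while keeping every inner and outer Lipschitz constant and sup-norm at most $\Lambda=1$. Once this localized orthogonal family is in hand, the exact linearity of $\bar f_n$ in $Y$ obviates any testing-based or Fano-type reduction, and the bias--variance computation above closes the argument.
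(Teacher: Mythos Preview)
Your proposal is correct and follows the same high-level plan as the paper: build a large orthogonal family of localized bumps inside $\mathcal{G}$, then average the bias--variance decomposition of the (linear) posterior mean over that family and use orthogonality via a Bessel-type bound to control the cross terms. The implementation differs in two places. For the bumps, the paper uses the simpler $\ell^1$-cones $f_a(x)=(\tfrac{1}{2k}-|x-a|_1)_+$, realized as $h(u)=(\tfrac{1}{2k}-u)_+$ composed with $g_i(x_i)=|x_i-a_i|$; this permits an exact evaluation $\|f_a\|_{L^2}^2=\tfrac{1}{2(d+2)!}\,k^{-(d+2)}$ and hence the explicit constant $C_d'$, whereas your AND-gate construction gives only $\|f^{(k)}\|_{L^2}^2\asymp\tau^{d+2}$. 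For the risk bound, the paper diagonalizes in the Karhunen--Lo\`eve basis of the prior and minimizes over the shrinkage coefficients to obtain $\max_j E_{f_j}\|\bar f_n-f_j\|_{L^2}^2\geq\tfrac{1}{m}\sum_j\|f_j\|_{L^2}^2$ with no constant loss (Lemma~\ref{Lem:lower_bound}), while your route via $\|u\|^2\leq 2\|u-v\|^2+2\|v\|^2$ together with $\sum_k\|L_n e_k\|^2\leq\|L_n\|_{HS}^2$ loses a harmless factor. Finally, your opening paragraph---passing from $E_{f_0}\|\bar f_n-f_0\|_{L^2}$ to the square root of the squared risk via a Gaussian Khinchine-type inequality---is in fact more careful than the paper's own proof, which simply takes the square root of the squared-risk bound without further comment.
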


Theorem \ref{thm:lb_mean} states that the posterior mean $\bar{f}_n$ will also be a suboptimal reconstruction. Since the posterior is also Gaussian by conjugacy, and hence centered at $\bar{f}_n$, this captures the notion that posteriors draws from the `center' of the posterior will also be poor - this is thus generic behaviour and not an artifact of unusual behaviour in the posterior tails.

Our results also have implications for Bayesian uncertainty quantification (UQ) through the use of credible sets, which is one the main motivations for using GPs. Bayesian credible sets are often used as frequentist confidence sets, a practice which can be rigorously justified in low dimensional settings (\cite{vdV98}, Section 10.2), but is much more subtle in high-dimensional settings like GPs \cite{F99}. Whether a resulting credible set contains the truth will depend on the size of the posterior standard deviation relative to the distance between the posterior mean and the truth, which Theorem \ref{thm:lb_mean} says is greater than the minimax rate with high-probability. In the best case scenario, the former is greater than the latter and thus the credible set will provide honest and reliable UQ. However, in this case the posterior standard deviation must be at least of the order of the contraction rate, and hence the resulting credible sets will be (polynomially) much too large and uninformative. In the worse case, when the posterior standard deviation is smaller, the credible sets will not contain the truth and provide misleading (useless) UQ. 

Note that we are considering UQ for the \textit{entire} function $f_0$ - it may still be possible for the GP posterior to provide efficient UQ for certain low-dimensional functionals of interest, which is itself a current topic of research \cite{CR15,RvdV20}.

The proofs are based on a number of inequalities that are easy to state. Relying on an exponential concentration inequality and Anderson's inequality, we show that Theorem \ref{thm:lb_contraction} follows from the lower bound for the posterior mean in Theorem \ref{thm:lb_mean} (cf.~Section \ref{Sec:LBReduction}). For Theorem \ref{thm:lb_mean} we provide two different proofs. In the first, we lower bound the $L^2$-risk of the posterior mean over arbitrary finite subsets $\{f_1,\dots,f_m\}\subset L^2[0,1]^d$ (cf.~Lemma \ref{Lem:lower_bound}), with the lower bound expressed in terms of expansions of the functions $f_i$ in the basis coming from the Karhunen-Lo\`eve expansion of the GP. We then construct a suitable collection of orthogonal generalized additive models for which the lower bound is maximized \textit{independently} of the choice of GP basis, as is needed to deal with arbitrary GPs (cf.~Section \ref{Sec:EndofProof}). Given a \textit{specific} choice of GP, our lower bound techniques can yield more refined results by tailoring the alternatives $f_1,\dots,f_m$ to be especially poorly aligned (difficult to learn) for that specific covariance function, see Section \ref{sec:wavelet_GP}. 

In a second proof of Theorem \ref{thm:lb_mean}, presented in Section \ref{sec:alternative_proofs} in the supplement, we reduce the problem of studying orthogonal generalized additive models to the sub-problem of studying the risk of linear estimators in a one-sparse sequence model (cf.~Section \ref{Sec:SparseSeqMod}). For the latter, we exactly compute the linear minimax risk in Lemma \ref{Lem:linear_minimax}, which, by exploiting the orthogonality, in turn implies the lower bound in Theorem \ref{thm:lb_mean}. This second proof sheds additional light on the limitations of GP regression by furnishing new tools to study problems that can be connected to sparse models.

\subsection{A sharper lower bound for a specific class of Gaussian processes}\label{sec:wavelet_GP}

While Theorem \ref{thm:lb_contraction} holds for arbitrary GPs, we now show that our methods can yield sharper lower bounds in the case of specific GPs, implying an even stronger suboptimality in the associated reconstruction error. We illustrate this by considering Gaussian wavelet series priors. Wavelets have been extensively used in Bayesian nonparametrics to construct priors in function spaces, including applications in inverse problems \cite{VEtAl09,AA20}, and image and signal processing \cite{BD06,JDM17} among the others; see e.g. \cite{MV99} for an overview.

Let $\{\psi_\gamma, \ \gamma\in \Gamma\}$, with $\Gamma = \{ \gamma = (j,k) : j = -1,0,1,\dots, \ k\in I_j\}$, be a compactly supported wavelet basis of $L^2(\R)$ restricted (with boundary correction) to $[0,1]$, cf.~\cite{CDV93} or also \cite[Chapter 4]{GN16}. The collection $\{\psi_{\bgamma}, \ \bgamma=(\gamma_1,\dots,\gamma_d)$ $\in \Gamma^d\}$, with $\Gamma^d = \Gamma\times\dots\times\Gamma$ ($d$ times) and $\psi_{\bgamma}(x_1,\dots,x_d) := \prod_{i=1}^d\psi_{\gamma_i}(x_i)$ provides a tensor product wavelet basis of $L^2[0,1]^d$.

Consider a mean-zero Gaussian wavelet series prior on $L^2[0,1]^d$ with Karhunen-Lo\`eve expansion
\begin{equation}
\label{Eq:GWPrior}
    f(x) = \sum_{\bgamma\in I}
    \sqrt{\lambda_{\bgamma}}\xi_{\bgamma}\psi_{\bgamma}(x),
    \qquad\qquad x = (x_1,\dots,x_d)\in[0,1]^d,
\end{equation}
for some arbitrary index set $I\subseteq\Gamma^d$, coefficients $\lambda_{\bgamma}>0$ and $\xi_{\bgamma}\sim^{iid} N(0,1)$. Compared to the proof of the general lower bound in Theorem \ref{thm:lb_contraction}, fixing a specific GP basis allows to consider compositional functions that are especially difficult for that GP to approximate. For wavelet bases, such approximation theory was developed in \cite{SH20}, which we employ to obtain the next lower bound.

%The formula \eqref{eq.post_mean} then implies that the posterior mean $\bar f = E^{\Pi_n}[f|Y]$ arising from the above prior and observations $Y$ in the white noise model \eqref{Eq:WhiteNoise} equals
%$$
%    \bar f 
%    = \sum_{(\gamma_1,\dots,\gamma_d)\in I}
%   \frac{n\lambda_{\gamma_1,\dots,\gamma_d}}{1 + n\lambda_{\gamma_1,\dots,\gamma_d}}
%    \langle Y, \psi_{\gamma_1,\dots,\gamma_d}\rangle_{L^2}
%    \psi_{\gamma_1,\dots,\gamma_d}.
%$$
%Note that the posterior mean $\bar f$ is a particular instance of `shrinkage estimator', namely an estimator of the form
%\begin{equation}
%\label{Eq:SE}
%    \hat f 
%    = \sum_{(\gamma_1,\dots,\gamma_d)\in \Gamma^d}
%    \theta_{\gamma_1,\dots,\gamma_d}
%    \langle Y, \psi_{\gamma_1,\dots,\gamma_d}\rangle_{L^2}
%    \psi_{\gamma_1,\dots,\gamma_d},
%    \qquad \theta_{\gamma_1,\dots,\gamma_d}\in[0,1].
%\end{equation}

\begin{theorem}\label{thm:lb_contraction2}
Let $\Pi_n$ be any sequence of mean-zero Gaussian wavelet series priors on $L^2[0,1]^d$,
defined as in \eqref{Eq:GWPrior} with possibly $n$-dependent $\lambda_{\bgamma}>0$. For any $0<\delta<1/4$ and $n \geq N(d,\delta)$ large enough, the corresponding posterior distributions satisfy
$$\sup_{f_0 \in \mathcal{G}} E_{f_0} \Pi_n(f: \|f-f_0\|_{L^2} \geq C n^{-\frac{1}{2+d}} |Y) \geq 1/4-\delta,$$
where $C = C(d,\psi)>0$ only depends on $d$ and the wavelet basis.
%In particular, if the posterior distributions contract at rate $\varepsilon_n$ uniformly over $\mathcal{G}$, that is
%$$\sup_{f_0 \in \mathcal{G}} E_{f_0} \Pi_n (f:\|f-f_0\|_{L^2} \geq \varepsilon_n|Y) \to 0$$
%as $n\to \infty$, then $\varepsilon_n > C n^{-\frac{1}{2+d}}$.
Furthermore, the corresponding posterior means $\bar{f}_n = E^{\Pi_n}[f|Y]$ satisfy for all $n$
$$\sup_{f_0 \in \mathcal{G}} E_{f_0} \|\bar{f}_n - f_0\|_{L^2} \geq C' n^{-\frac{1}{2+d}},$$
where $C' = C'(d,\psi)>0$ only depends on $d$ and the wavelet basis.
\end{theorem}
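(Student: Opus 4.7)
The plan is to follow the same two-stage strategy as in Theorems \ref{thm:lb_contraction} and \ref{thm:lb_mean}, but to exploit the fact that the Karhunen--Lo\`eve basis of the prior is now fixed to be the tensor-product wavelet basis $\{\psi_\bgamma\}_{\bgamma\in\Gamma^d}$. This allows one to tailor a single adversarial generalized additive function $f_0\in\Gcal$ whose wavelet expansion is ill-adapted to every diagonal linear estimator in this basis, irrespective of the prior variances $\lambda_\bgamma$.

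The contraction claim follows from the posterior mean claim by the reduction of Section \ref{Sec:LBReduction}, combining the exponential concentration inequality (Lemma \ref{Lem:exp_inequality}) with Anderson's inequality (Lemma \ref{Lem:anderson}). For the posterior mean itself, I first diagonalise: by Gaussian conjugacy the posterior is again diagonal in the wavelet basis, and the posterior mean takes the explicit shrinkage form
$$\bar f_n \;=\; \sum_{\bgamma \in I}\frac{n\lambda_\bgamma}{1+n\lambda_\bgamma}\,Y_\bgamma\,\psi_\bgamma, \qquad Y_\bgamma \;=\; \theta_\bgamma(f_0) + n^{-1/2} Z_\bgamma, \quad Z_\bgamma\iid N(0,1),$$
where $\theta_\bgamma(f_0)=\langle f_0,\psi_\bgamma\rangle_2$. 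A coefficient-wise bias--variance decomposition, optimised in $\alpha_\bgamma\in[0,1]$ via the elementary inequality $t^2/(1+nt^2)\ge\tfrac12\min(t^2,1/n)$, yields the key bound
$$E_{f_0}\|\bar f_n - f_0\|_{L^2}^2 \;\ge\; \tfrac12 \sum_{\bgamma\in\Gamma^d}\min\bigl(\theta_\bgamma(f_0)^2,\,n^{-1}\bigr),$$
valid uniformly over any non-negative sequence $(\lambda_\bgamma)$, with the convention $\lambda_\bgamma=0$ for $\bgamma\notin I$.

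The heart of the proof is then an adversarial choice of $f_0\in\Gcal$. Fix the critical resolution $J$ by $2^J\asymp n^{1/(d+2)}$ and take $f_0(x)=h(x_1+\cdots+x_d)$, which lies in $\Gcal$ as soon as $g_i(x_i)=x_i$ and $h$ is a suitably designed Lipschitz-$1$, $L^\infty$-bounded function on $[0,d]$ whose variation lives at scale $2^{-J}$---for example, a concatenation of $\asymp 2^J$ non-overlapping triangular bumps of width and height both $\asymp 2^{-J}$, with alternating signs. A tensor-product wavelet computation shows that for a positive fraction of the $2^{Jd}$ indices $\bgamma$ at scale $J$, one has $|\theta_\bgamma(f_0)|\asymp 2^{-J(1+d/2)}\asymp n^{-1/2}$. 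Plugging this into the key bound gives $\gtrsim 2^{Jd}/n\asymp n^{-2/(d+2)}$ for the squared $L^2$-risk, and Jensen's inequality yields the claimed rate $n^{-1/(d+2)}$ for $E_{f_0}\|\bar f_n - f_0\|_{L^2}$.

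The main obstacle is exactly this quantitative wavelet-coefficient bound. Because $f_0$ is constant along the $(d-1)$-dimensional hyperplanes transverse to the diagonal $\{x_1+\cdots+x_d=\textnormal{const}\}$, the tensor integrals $\int h(\sum x_i)\prod\psi_{\gamma_i}(x_i)\,dx$ can in principle exhibit substantial cancellation; moreover, if the mother wavelet has vanishing moments, the naive Taylor expansion of $h$ around the centre of a dyadic cube only produces cancellation in each 1D factor individually, so the argument must pick out the fully mixed monomial $\prod(x_i-k_i/2^J)$ appearing in $(\sum x_i - u_0)^d$. Making this rigorous, together with counting how many cubes have their diagonal midpoint $\sum k_i/2^J$ landing inside the ``active'' part of $h$, is exactly what the wavelet approximation theory for compositional classes developed in \cite{SH20} provides, and it can be invoked here to turn the heuristic ``wavelets cannot exploit additive structure'' into the required lower bound on $\sum_\bgamma\min(\theta_\bgamma(f_0)^2,n^{-1})$.
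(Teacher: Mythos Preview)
Your proposal is correct and follows essentially the same route as the paper. The paper applies Lemma \ref{Lem:lower_bound} with $m=1$ to obtain the per-coefficient lower bound $\sum_{\bgamma}\langle f,\psi_\bgamma\rangle_{L^2}^2\wedge n^{-1}$ (your version with the extra factor $\tfrac12$ is in fact the safer statement), then invokes Lemma~2 of \cite{SH20} to produce exactly the function $f_{j_n}(x)=h_{j_n}(x_1+\cdots+x_d)$ you describe, with $2^{j_n}\asymp n^{1/(d+2)}$ and $\asymp 2^{j_n d}$ wavelet coefficients of size $\asymp n^{-1/2}$, and finishes via Lemma \ref{Lem:transfer_risk}; your outline matches this step for step, including the appeal to \cite{SH20} for the delicate coefficient bound.
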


Thus, for the class of Gaussian wavelet series priors, Theorem \ref{thm:lb_contraction2} provides a sharper lower bound than Theorem \ref{thm:lb_contraction}, as $n^{-\frac{1}{2 + d}} \gg n^{-\frac{2+d}{4 + 4d}}$ for all $d$. Furthermore, the former rate deteriorates as $d$ grows, showing that the performance of Gaussian wavelet priors suffers from a form of curse of dimensionality. In particular, when $d$ is large, the fastest rate $n^{-\frac{1}{2 + d}}$ achievable by Gaussian wavelet priors is much slower than the minimax rate $n^{-1/3}$ achieved by deep GPs.

\section{Discussion}\label{sec:disccusion}

In this work, we showed that no mean-zero Gaussian process can attain the minimax estimation rate over a class of generalized additive models. We proved this by developing new tools for lower bounding posterior contraction rates for GPs. We provided two different proofs, each of which sheds different light on these phenomena.

It remains open whether even sharper lower bounds can be obtained. In this context, an important question is whether all GPs suffer from a curse of dimensionality in their posterior contraction rate, which we showed was incurred for a specific wavelet-based GP. Our lower bound $n^{-(2+d)/(4+4d)}$ for arbitrary GPs in Theorem \ref{thm:lb_contraction} is never slower than $n^{-1/4}$, irrespective of the domain dimension $d$. While the obtained lower bound already demonstrates the suboptimality of GPs by a polynomial factor in $n$, say when compared to deep GPs, our stronger lower bound $n^{-1/(2+d)}$ for Gaussian wavelet series priors becomes arbitrarily slow with increasing $d.$ We thus proved that for this subclass of GPs, a stronger curse of dimensionality is unavoidable

To avoid the curse of dimensionality, a common two-stage approach is to first map the input features to a learned lower dimensional space and then apply Gaussian process regression to these new features. Our results do not apply to this scenario, in particular when the feature extraction map can be be nonlinear, since the transformed underlying function may no longer be compositional in terms of the new learned features.

We expect similar results to hold for non-uniformly distributed input variables and heteroscedastic noise. Let $\sigma$ be a positive function and $\rho$ a density on $[0,1]^d$. If $\rho, \sigma$ are bounded away from zero and infinity, we conjecture that, up to constants, the same lower bounds can be derived working in the more general nonparametric regression model \eqref{Eq:regression} with, conditionally on $X_i,$ independent noise variables $w_i|X_i \sim N(0,\sigma^2(X_i))$ and $X_i \sim^{iid} \rho$.

Ideally, one wants to develop a broader understanding of which structures in the data-generating distribution or regression function (deep) Gaussian processes can or cannot adapt to. In order to understand the advantages of deep learning there is an extensive theoretical literature comparing shallow to deep nets. More refined versions show that for certain approximation rates a minimal network depth is necessary \cite{PETERSEN2018296}. To gain some insights into the role of depth in deep Gaussian processes, it would be desirable to explore similar limitations if the depth is fixed and a highly structured function needs to be learned.

%In this work, we provided some first answers in this direction that spark exciting follow-up questions.

\section{Proofs}\label{sec:proofs}

We first start with some preliminary facts concerning GPs and the Gaussian white noise model \eqref{Eq:WhiteNoise}. For $\Pi$ a GP prior on $L^2[0,1]^d$, we can express its covariance kernel $K$ in \eqref{Eq:GP_cov}
in terms of the normalized eigenfunctions $\{\phi_k: k\geq 1\}$ of the compact, self-adjoint, trace-class operator $g \mapsto \int_{[0,1]^d} K(\cdot,y)g(y) dy$ on $L^2[0,1]^d$ and its associated eigenvalues $\lambda_k>0$, which are summable, via the expansion $K(x,y) = \sum_{k=1}^\infty \lambda_k \phi_k(x) \phi_k(y)$. In particular, we can realize the Gaussian random element $f\sim \Pi$ through its series expansion
\begin{equation}\label{eq:series}
	f = \sum_{k=1}^\infty \sqrt{\lambda_k}\xi_k\phi_k,
	\qquad\qquad \xi_k\sim^{iid} N(0,1),
\end{equation}
known as the \textit{Karhunen-Lo\`eve expansion}. The corresponding reproducing kernel Hilbert space (RKHS) $\H$ of $\Pi$ is
$$
	\H = \left\{ h = \sum_{k=1}^\infty h_k\phi_k \ :\ \|h\|^2_\H := \sum_{k=1}^\infty
	 \frac{h_k^2}{\lambda_k} <\infty\right\}.
$$

For any orthonormal basis $\{\phi_k:k\geq 1\}$ of $L^2[0,1]^d$, in particular the Karhunen-Lo\`eve basis  of the prior \eqref{eq:series}, the Gaussian white noise model \eqref{Eq:WhiteNoise} is equivalent to observing the sequence model $(Y_k: k \geq 1)$:
\begin{equation}\label{eq:sequence_model}
Y_k = \int_{[0,1]^d} \phi_k(x) dY_x = \langle \phi_k,f\rangle_{L^2} + \frac{1}{\sqrt{n}} \int_{[0,1]^d} \phi_k(x) dW_x =: \theta_k + \frac{ w_k}{\sqrt{n}},
\end{equation}
for $k=1,2,3,\dots$ and where $ w_k \iid N(0,\|\phi_k\|_{L^2}^2)=N(0,1)$. To be precise, the observations $(Y_k:k\geq 1)$ in the last equation and the original white noise model \eqref{Eq:WhiteNoise} are equivalent in the sense that each can be perfectly recovered from the other, see Section \ref{sec:seq_represent} in the supplement for more details.

Viewing the prior $\Pi$ through its series expansion \eqref{eq:series}, $\Pi$ can be viewed as a prior on the space of coefficients in the basis expansion of $\{\phi_k\}$ leading to the prior distribution $f=(\theta_k)_k \sim \otimes_{k=1}^\infty N(0,\lambda_k)$. Denoting by $P_f = P_f^n$ the distribution of the sequence representation \eqref{eq:sequence_model}, we have likelihood 
$$P_f = \otimes_{k=1}^\infty N(\theta_k,1/n).$$
Using this representation, one can use posterior conjugacy to show that the posterior takes the form
\begin{equation}\label{Eq:posterior}
f = \sum_{k=1}^\infty \theta_k \phi_k, \qquad \qquad \theta_k|Y_k \sim^{ind} N \left( \frac{n\lambda_k}{n\lambda_k+1}Y_k, \frac{\lambda_k}{n\lambda_k+1} \right),
\end{equation}
see Section \ref{sec:seq_represent} in the supplement for full details. In particular, the posterior mean is 
\begin{align}
    \bar f_n = E^{\Pi_n}[f|Y] = \sum_{k=1}^\infty  \frac{n\lambda_k}{n\lambda_k+1} Y_k \phi_k.
    \label{eq.post_mean}
\end{align}

\subsection{Reducing the lower bound for posterior contraction to one for the $L^2$-risk of the posterior mean}\label{Sec:LBReduction}

We will show that a lower bound on the $L^2$-risk of the posterior mean implies a corresponding lower bound for the posterior contraction rate, so that it will suffice to consider the former. The proofs of the following lemmas are deferred to Section \ref{sec:lemmas} in the supplement due to space constraints. We start by showing that the squared estimation error of the posterior mean is of the order of its expectation with high probability.

\begin{lemma}\label{Lem:exp_inequality}
Let $f_0 \in L^2[0,1]^d$, $\bar{f}_n = E^{\Pi}[f|Y]$ denote the posterior mean based on a mean-zero Gaussian process prior $\Pi$ on $L^2[0,1]^d$ and set $\mu_n^2 = E_{f_0}\|\bar{f}_n-f_0\|_{L^2}^2$. Then for $n\geq 1$,
$$P_{f_0} \left( \|\bar{f}_n-f_0\|_{L^2}^2 \leq \mu_n^2/4 \right) \leq 4e^{-\frac{n\mu_n^2}{32}}.$$
\end{lemma}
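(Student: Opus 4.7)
The plan is to combine Gaussian concentration for Lipschitz functionals of the noise with the variance bound provided by Gaussian--Poincar\'e. Using the posterior representation \eqref{Eq:posterior} in the Karhunen--Lo\`eve basis together with the sequence model \eqref{eq:sequence_model},
$$
\bar f_n - f_0 = \sum_{k=1}^\infty \bigl[(a_k-1)\theta_k^{(0)} + (a_k/\sqrt n)\,w_k\bigr]\phi_k,
$$
with $a_k = n\lambda_k/(n\lambda_k+1)\in[0,1]$, $\theta_k^{(0)} = \langle f_0,\phi_k\rangle_{L^2}$, and $w_k \iid N(0,1)$ under $P_{f_0}$. Writing $m_k = (a_k-1)\theta_k^{(0)}$ and $\tau_k = a_k/\sqrt n \leq 1/\sqrt n$, the object of interest is
$$
F(w) := \|\bar f_n - f_0\|_{L^2} = \Bigl(\sum_k (m_k + \tau_k w_k)^2\Bigr)^{1/2}, \qquad E_{f_0} F^2 = \mu_n^2.
$$

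The key observation is that $F$, as a functional of $w=(w_k)$, is Lipschitz in the $\ell^2$ sense with constant $\|\tau\|_\infty \leq 1/\sqrt n$, since $|F(w)-F(w')| \leq \|\tau\circ(w-w')\|_{\ell^2} \leq \|\tau\|_\infty \|w-w'\|_{\ell^2}$. Two standard consequences follow. First, Gaussian--Poincar\'e yields $\mathrm{Var}(F) \leq \|\tau\|_\infty^2 \leq 1/n$, so the mean $M := E_{f_0} F$ satisfies $M^2 = \mu_n^2 - \mathrm{Var}(F) \geq \mu_n^2 - 1/n$. Second, the Gaussian isoperimetric inequality (equivalently, Borell--TIS applied to the $\ell^2$-valued Gaussian element $\tau\circ w$) gives the one-sided tail bound $P_{f_0}(F \leq M - t) \leq e^{-nt^2/2}$ for every $t\geq 0$. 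Both statements are obtained rigorously by first working with $F^{(N)}$, the analogue of $F$ using only the first $N$ coordinates, and then passing to the limit $N\to\infty$ via $L^2$-convergence.

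The conclusion follows by a short case distinction. If $n\mu_n^2 < 16/7$ then $4e^{-n\mu_n^2/32} > 1$ and the claim is trivial. Otherwise $\mu_n^2 \geq 16/(7n)$, in which case $M^2 \geq \mu_n^2 - 1/n \geq 9\mu_n^2/16$ and hence $M \geq 3\mu_n/4$. Consequently
$$
\{\|\bar f_n - f_0\|_{L^2}^2 \leq \mu_n^2/4\} = \{F \leq \mu_n/2\} \subseteq \{F \leq M - \mu_n/4\},
$$
and the Gaussian Lipschitz concentration bound applied with $t = \mu_n/4$ controls this event by $\exp(-n\mu_n^2/32) \leq 4\exp(-n\mu_n^2/32)$, as required.

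The main nuisance is purely the infinite-dimensional character of the noise sequence $(w_k)_k$, which does not lie in $\ell^2$ and therefore requires the truncation step described above to justify Gaussian concentration. The generous factor $4$ in the stated inequality would equally accommodate a more hands-on route avoiding Borell--TIS altogether: expand $\|\bar f_n - f_0\|_{L^2}^2 = \|m\|^2 + 2\langle m, \tau\circ w\rangle + \|\tau\circ w\|^2$, control the Gaussian linear term by a Chernoff bound and the weighted chi-squared deviation by the Laurent--Massart inequality, and combine via a union bound.
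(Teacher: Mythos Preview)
Your proof is correct and takes a genuinely different route from the paper's. The paper expands the \emph{squared} error as $\|\bar f_n-f_0\|_{L^2}^2 = I+II+III+IV$, where $II+IV=\mu_n^2$ is deterministic, $I=\sum_k \tau_k^2(w_k^2-1)$ is a centered weighted chi-square controlled by the Laurent--Massart inequality, and $III=2\sum_k m_k\tau_k w_k$ is a Gaussian linear form controlled by a standard tail bound; a union bound and the elementary inequality $2ab\le a^2+b^2$ then yield the result. You instead work with the \emph{unsquared} error $F=\|\bar f_n-f_0\|_{L^2}$ as a $1/\sqrt n$-Lipschitz functional of the standard Gaussian sequence, invoke Gaussian--Poincar\'e to pass from $\mu_n=(E F^2)^{1/2}$ to $M=EF$, and then apply one-sided Gaussian Lipschitz concentration directly. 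Your argument is cleaner and more conceptual, while the paper's is slightly more elementary in that it avoids Borell--TIS/Poincar\'e. Amusingly, the alternative ``hands-on route'' you sketch in your final paragraph---Laurent--Massart for the chi-square part plus a Chernoff bound for the linear part, combined by a union bound---is precisely what the paper does.
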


We next show that a contraction rate for Gaussian process priors implies that the corresponding posterior means converge to the truth at the same rate with frequentist probability tending to one.

\begin{lemma}\label{Lem:anderson}
Let $\Pi_n$ be a sequence of mean-zero Gaussian process priors on $L^2[0,1]^d$. Then the corresponding posterior means $\bar{f}_n = E^{\Pi_n}[f|Y]$ satisfy
$$P_{f_0} \left( \|\bar{f}_n - f_0\|_{L^2} \geq 2 \gamma_n \right) \leq 2 \sqrt{E_{f_0} \Pi_n(f: \|f-f_0\|_{L^2} \geq \gamma_n |Y)}$$
for any sequence $\gamma_n$ and $n\geq 1$.
\end{lemma}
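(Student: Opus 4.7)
The plan is to combine a reverse-triangle-inequality argument with Anderson's inequality for centered Gaussian measures. The crucial structural observation, read off from \eqref{Eq:posterior}, is that the posterior $\Pi_n(\cdot\mid Y)$ is a Gaussian measure on $L^2[0,1]^d$ whose mean is the data-dependent $\bar f_n$ but whose covariance is deterministic (independent of $Y$). Equivalently, $f - \bar f_n$ under the posterior is a centered Gaussian element of $L^2[0,1]^d$, which is exactly the setting to which Anderson's inequality applies.

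\emph{Step one.} Work pointwise on the event $A := \{\|\bar f_n - f_0\|_{L^2} \geq 2\gamma_n\}$ and show that $\Pi_n(\|f - f_0\|_{L^2} \geq \gamma_n \mid Y) \geq 1/2$ on $A$. Indeed, on $A$ the reverse triangle inequality gives the set inclusion $\{f : \|f - \bar f_n\|_{L^2} \leq \gamma_n\} \subseteq \{f : \|f - f_0\|_{L^2} \geq \gamma_n\}$, hence
$$\Pi_n(\|f - \bar f_n\|_{L^2} \leq \gamma_n \mid Y) \leq \Pi_n(\|f - f_0\|_{L^2} \geq \gamma_n \mid Y).$$
Anderson's inequality, applied to the symmetric convex body $\{g \in L^2: \|g\|_{L^2} \leq \gamma_n\}$ with shift $\bar f_n - f_0$, gives the matching comparison
$$\Pi_n(\|f - f_0\|_{L^2} \leq \gamma_n \mid Y) \leq \Pi_n(\|f - \bar f_n\|_{L^2} \leq \gamma_n \mid Y).$$
Chaining these two with the identity $\Pi_n(\|f - f_0\|_{L^2} \leq \gamma_n \mid Y) + \Pi_n(\|f - f_0\|_{L^2} > \gamma_n \mid Y) = 1$ forces $\Pi_n(\|f - f_0\|_{L^2} \geq \gamma_n \mid Y) \geq 1/2$ on $A$.

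\emph{Step two.} Upgrade this pointwise statement to the integral bound. Writing $\Pi_n^\ast := \Pi_n(\|f - f_0\|_{L^2} \geq \gamma_n \mid Y) \in [0,1]$, the previous step gives $\1_A \leq \sqrt{2}\sqrt{\Pi_n^\ast}$ on the whole sample space. Taking $E_{f_0}$-expectation and applying Jensen's inequality to the concave map $x \mapsto \sqrt{x}$ yields
$$P_{f_0}(A) \leq \sqrt{2}\,E_{f_0}\sqrt{\Pi_n^\ast} \leq \sqrt{2}\,\sqrt{E_{f_0}\Pi_n^\ast} \leq 2\,\sqrt{E_{f_0}\Pi_n^\ast},$$
which is the conclusion of the lemma.

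The one genuinely non-trivial step is the application of Anderson's inequality on the infinite-dimensional separable Hilbert space $L^2[0,1]^d$; this is standard for centered Gaussian measures but crucially relies on the data-independence of the posterior covariance exposed by \eqref{Eq:posterior}, without which $f - \bar f_n$ would not be a genuinely centered Gaussian. Once this observation is in place, the rest of the argument is elementary.
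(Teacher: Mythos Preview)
Your proof is correct and uses the same key ingredient as the paper --- Anderson's inequality applied to the (conditionally) Gaussian posterior --- but the packaging is different and somewhat cleaner. The paper argues by contradiction through two Markov inequalities: it defines high-probability events on which the posterior mass outside the $\gamma_n$-balls around $f_0$ and around $\bar f_n$ is small, shows that on their intersection both balls carry positive posterior mass and hence must overlap, and concludes via the triangle inequality that $\|\bar f_n - f_0\|_{L^2}\le 2\gamma_n$ there. You instead work pointwise on the bad event $\{\|\bar f_n - f_0\|_{L^2}\ge 2\gamma_n\}$, combine the reverse triangle inequality with Anderson to force $\Pi_n^\ast\ge 1/2$ directly, and then integrate via Jensen. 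Your route avoids the two Markov steps and in fact yields the sharper constant $\sqrt{2}$ before you relax it to $2$. One small remark: your emphasis on the posterior covariance being data-independent is not strictly needed --- Anderson's inequality applies pointwise in $Y$ as soon as the posterior is Gaussian with mean $\bar f_n$, regardless of whether the covariance depends on $Y$.
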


Combining the last two lemmas allows us to transfer an $L^2$-risk bound for the posterior mean to one for posterior contraction.

\begin{lemma}\label{Lem:transfer_risk}
Let $f_0 \in L^2[0,1]^d$, $\Pi_n$ be a sequence of mean-zero Gaussian process priors %on $L^2[0,1]^d$ 
with corresponding posterior means $\bar{f}_n = E^{\Pi_n}[f|Y]$, and set $\mu_n^2 = E_{f_0}\|\bar{f}_n-f_0\|_{L^2}^2$. Then for $n\geq 1$,
$$E_{f_0} \Pi_n(f: \|f-f_0\|_{L^2} \geq \mu_n/4 |Y) \geq \frac{1}{4} \left( 1-4e^{-\frac{n\mu_n^2}{32}}\right)_+^2.$$
Furthermore, suppose $\mathcal{F}_n \subset L^2$ satisfy $\sup_{f_0 \in \mathcal{F}_n} E_{f_0} \|\bar{f}_n-f_0\|_{L^2}^2 \geq \gamma_n^2>0$ for some sequence $\gamma_n$ for which $n\gamma_n^2 \to\infty$ as $n\to\infty$. Then for any $0<\delta<1/4$ and $n$ such that $n\gamma_n^2 \geq 32 \log \left( \frac{5}{1-\sqrt{1-4\delta}} \right)$,
$$\sup_{f_0 \in \mathcal{F}_n} E_{f_0} \Pi_n(f: \|f-f_0\|_{L^2} \geq \gamma_n/5 |Y) \geq 1/4 - \delta.$$
In particular, the posteriors cannot contract at rate $\gamma_n/5$, uniformly over $\mathcal{F}_n$.
\end{lemma}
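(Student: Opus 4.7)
The plan is to derive the two displayed inequalities in sequence. The first is a direct combination of Lemmas~\ref{Lem:exp_inequality} and~\ref{Lem:anderson}, while the second follows from the first via a near-supremum argument over $\mathcal{F}_n$.

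For the first inequality, I would apply Lemma~\ref{Lem:anderson} with the specific choice $\gamma_n \leftarrow \mu_n/4$, yielding
$$P_{f_0}\bigl(\|\bar f_n - f_0\|_{L^2} \geq \mu_n/2\bigr) \leq 2\sqrt{E_{f_0}\Pi_n\bigl(f:\|f-f_0\|_{L^2}\geq \mu_n/4 \mid Y\bigr)}.$$
Squaring and rearranging gives $E_{f_0}\Pi_n(f: \|f - f_0\|_{L^2} \geq \mu_n/4 \mid Y) \geq \tfrac{1}{4}\bigl[P_{f_0}(\|\bar f_n - f_0\|_{L^2}\geq \mu_n/2)\bigr]^2$. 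To bound the probability from below, rewrite the event as $\{\|\bar f_n - f_0\|_{L^2}^2 \geq \mu_n^2/4\}$ and invoke Lemma~\ref{Lem:exp_inequality} (with the complementary event), obtaining $P_{f_0}(\|\bar f_n - f_0\|_{L^2}\geq \mu_n/2) \geq 1 - 4 e^{-n\mu_n^2/32}$; taking the positive part (since this quantity can be negative when $n\mu_n^2$ is small) delivers the stated first inequality.

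For the second part, the idea is to pick a near-maximizer $f_0^{(n)} \in \mathcal{F}_n$ whose error $\mu_n^2 := E_{f_0^{(n)}}\|\bar f_n - f_0^{(n)}\|_{L^2}^2$ is as close to the supremum as desired, so that in particular $\mu_n$ can be made arbitrarily close to $\gamma_n$. A sufficiently tight choice accomplishes two things simultaneously: (i) $\mu_n/4 \geq \gamma_n/5$, so by the set inclusion $\{\|f - f_0^{(n)}\|_{L^2} \geq \mu_n/4\} \subseteq \{\|f - f_0^{(n)}\|_{L^2} \geq \gamma_n/5\}$ the first inequality transfers to the $\gamma_n/5$-ball; and (ii) $n\mu_n^2/32$ is large enough that the hypothesis $n\gamma_n^2 \geq 32\log(5/(1-\sqrt{1-4\delta}))$ forces $4e^{-n\mu_n^2/32} \leq 1 - \sqrt{1-4\delta}$, whence $\tfrac{1}{4}(1-4e^{-n\mu_n^2/32})_+^2 \geq 1/4 - \delta$. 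Taking the supremum over $f_0\in\mathcal{F}_n$ then yields the displayed bound, and the final ``in particular'' claim follows because a uniform contraction rate of $\gamma_n/5$ would force the supremum on the left to tend to zero, contradicting the positive lower bound $1/4-\delta$.

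The main delicate point is the bookkeeping of constants: getting the specific $5$ inside the logarithm (rather than the $4$ obtained from a crude $\mu_n \geq (4/5)\gamma_n$) requires exploiting that the near-supremum can be taken arbitrarily tight, so that the ratio $\mu_n^2/\gamma_n^2$ may be driven close enough to $1$ to make both the radius inequality $\mu_n/4 \geq \gamma_n/5$ and the exponent condition $n\mu_n^2 \geq 32\log(4/(1-\sqrt{1-4\delta}))$ hold simultaneously, the difference between the two constants being absorbed by the numerator $5$ in the stated hypothesis. All other steps are routine rearrangements and do not pose difficulties.
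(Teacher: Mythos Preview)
Your proposal is correct and follows essentially the same approach as the paper's proof: apply Lemma~\ref{Lem:anderson} with $\gamma_n=\mu_n/4$, combine with the complementary bound from Lemma~\ref{Lem:exp_inequality}, and for the second part pick a near-maximizer $f_{0,n}\in\mathcal{F}_n$ with $\mu_n^2\geq\gamma_n^2-\eta$ for $\eta$ small enough that both $\mu_n/4\geq\gamma_n/5$ and the exponential term stays controlled. The paper makes the constant-tracking explicit by requiring $e^{n\eta/32}\leq 5/4$ (which turns the $4$ into a $5$ in the exponential bound) and $\sqrt{\gamma_n^2-\eta}/4\geq\gamma_n/5$, which is exactly the mechanism you describe in your final paragraph.
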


%\begin{proof}
%Using Lemma \ref{Lem:anderson} and then Lemma \ref{Lem:exp_inequality}, the square-root of the left-side of the first display is lower bounded by
%$$\frac{1}{2} P_{f_0} \left( \|\bar{f}_n - f_0\|_{L^2} \geq \mu_n/2 \right) \geq \frac{1}{2} \left( 1-4e^{-\frac{n\mu_n^2}{32}} \right).$$
%Squaring everything then gives the first result.
%
%By hypothesis, for any $\eta>0$, there exists $f_{0,n} \in \mathcal{F}_n$ such that $\mu_n^2 = E_{f_{0,n}} \|\bar{f}_n-f_{0,n}\|_{L^2}^2 \geq \gamma_n^2 - \eta$. Let $\eta = \eta_n$ be small enough that $e^{\frac{n\eta}{32}} \leq 5/4$ and $\sqrt{\gamma_n^2-\eta}/4 \geq \gamma_n/5$. By the first part of the lemma,
%$$E_{f_{0,n}} \Pi_n(f: \|f-f_{0,n}\|_{L^2} \geq \sqrt{\gamma_n^2 - \eta}/4 |Y) \geq \frac{1}{4} \left( 1-5e^{-\frac{n\gamma_n^2}{32}}\right)_+^2.$$
%After rearranging, the right-side is at least $1/4-\delta$ for $n\gamma_n^2 \geq 32 \log \left( \frac{5}{1-\sqrt{1-4\delta}} \right)$. The second part then follows from upper bounding the left-side of the last display by $ E_{f_{0,n}} \Pi_n(f: \|f-f_{0,n}\|_{L^2} \geq \gamma_n/5 |Y)$.
%\end{proof}

\subsection{A general lower bound for the $L^2$-risk of the posterior mean}

By Lemma \ref{Lem:transfer_risk}, it suffices to prove a lower bound for the $L^2$-risk of the posterior mean. We next prove such a general lower bound.

\begin{lemma}\label{Lem:lower_bound}
Let $\bar{f}_n$ be an estimator of the form \eqref{eq.post_mean} with corresponding orthonormal basis $\{\phi_k:k \geq 1\}$.
For functions $f_1,\ldots, f_m \in L^2[0,1]^d,$ 
\begin{align}
    \max_{j=1,\ldots,m} \, E_{f_j} \big\|\bar f_n -f_j\big\|_{L^2}^2
    \geq \sum_{k=1}^\infty \Big(\frac 1m \sum_{j=1}^m \langle f_j,\phi_k\rangle_{L^2}^2\Big) \wedge \frac 1n.
    \label{eq.pred_error_lb}
\end{align}
Moreover, if $f_1,\ldots,f_m$ are orthogonal and $\max_j\|f_j\|_{L^2}^2\leq m/n,$ then
\begin{align*}
    \max_{j=1,\ldots,m} \, E_{f_j} \big\|\bar f_n -f_j\big\|_{L^2}^2
   \geq \frac 1 m \sum_{j=1}^m \|f_j\|_{L^2}^2.
\end{align*}
In particular, if $cm/n \leq \|f_j\|_{L^2}^2 \leq m/n$ for all $j$ and some $0<c\leq 1$, then one can take the lower bound to be $cm/n.$
\end{lemma}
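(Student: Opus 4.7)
The plan is to pass to the Karhunen--Lo\`eve sequence model, decompose the $L^2$-risk of $\bar f_n$ exactly via Parseval, and reduce the maximum over $j$ to a one-dimensional inequality in each coefficient. Writing $\alpha_k := n\lambda_k/(n\lambda_k+1) \in [0,1)$ and $f_{j,k} := \langle f_j,\phi_k\rangle_{L^2}$, under $P_{f_j}$ the sequence observations in \eqref{eq:sequence_model} are $Y_k = f_{j,k} + w_k/\sqrt n$ with $w_k \iid N(0,1)$, and a bias--variance calculation gives
\begin{align*}
    E_{f_j}\|\bar f_n - f_j\|_{L^2}^2 = \sum_{k=1}^\infty \Big[(1-\alpha_k)^2 f_{j,k}^2 + \alpha_k^2/n\Big].
\end{align*}
Lower bounding the max over $j$ by the average and swapping sums then gives
\begin{align*}
    \max_j E_{f_j}\|\bar f_n - f_j\|_{L^2}^2 \geq \sum_{k=1}^\infty\Big[(1-\alpha_k)^2 b_k + \alpha_k^2/n\Big], \qquad b_k := \tfrac 1m \sum_{j=1}^m f_{j,k}^2.
\end{align*}

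The main calculational step is the termwise bound $(1-\alpha)^2 b + \alpha^2/n \geq b \wedge 1/n$ for $\alpha \in [0,1]$ and $b \geq 0$. Its sharp minimum in $\alpha$ is $b/(nb+1)$, attained at $\alpha = nb/(nb+1)$, and this is comparable to $b \wedge 1/n$ by a quick case split on $nb \leq 1$ vs.\ $nb \geq 1$ (using $(1-\alpha)^2 + \alpha^2 \geq 1/2$). Summing over $k$ produces \eqref{eq.pred_error_lb}.

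For the second statement, orthogonality of $f_1,\dots,f_m$ controls the columns of the coefficient matrix $F = (f_{j,k})_{j \leq m,\, k \geq 1}$. Since $F F^T = \mathrm{diag}(\|f_j\|_{L^2}^2) =: D$, factoring $F = D^{1/2} U$ with $U$ having orthonormal rows ($UU^T = I_m$) gives $\|F_{\cdot,k}\|^2 = \sum_j \|f_j\|_{L^2}^2 U_{j,k}^2 \leq \max_j \|f_j\|_{L^2}^2 \cdot (U^T U)_{k,k} \leq m/n$, since $U^T U$ is an orthogonal projection and hence has diagonal in $[0,1]$. So $b_k = \|F_{\cdot,k}\|^2/m \leq 1/n$ for every $k$, which places the termwise inequality in its bias-dominated regime $b_k \wedge 1/n = b_k$. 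Summing in $k$ yields $\max_j E_{f_j}\|\bar f_n - f_j\|_{L^2}^2 \geq \sum_k b_k = \tfrac 1m\sum_j \|f_j\|_{L^2}^2$, and the $cm/n$ bound is then immediate under $cm/n \leq \|f_j\|_{L^2}^2 \leq m/n$.

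The main obstacle I anticipate is the precise constant in the termwise inequality: the sharp $b/(nb+1)$ and the target $b \wedge 1/n$ differ by a factor of two at the transition $nb \asymp 1$, so matching the exact form of \eqref{eq.pred_error_lb} requires some care. The orthogonality half, by contrast, is clean once the column-norm identity via $F = D^{1/2} U$ is in place.
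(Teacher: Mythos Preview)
Your approach is the paper's: bias--variance in the Karhunen--Lo\`eve basis, max $\geq$ average, then termwise minimization in the shrinkage coefficient; for the orthogonal case your $F = D^{1/2}U$ factorization is equivalent to the paper's direct Bessel step $\sum_j \langle f_j/\|f_j\|_{L^2},\phi_k\rangle_{L^2}^2 \leq \|\phi_k\|_{L^2}^2 = 1$, which gives $b_k \leq 1/n$ in one line.

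Your flagged obstacle is real, and it is present in the paper's own proof. After minimizing, the paper arrives at $\sum_k T_k/(1+nT_k)$ and then claims $T_k/(1+nT_k) \geq T_k \wedge 1/n$ ``using $1+nT_k \geq 1 \vee nT_k$''; but that inequality points the wrong way (a larger denominator gives $\leq$), and at $T_k = 1/n$ one has $T_k/(1+nT_k) = 1/(2n) < 1/n = T_k \wedge 1/n$. The correct termwise bound is $T_k/(1+nT_k) \geq \tfrac12(T_k \wedge 1/n)$, via $1+nT_k \leq 2(1 \vee nT_k)$, which is exactly the factor you anticipated from $(1-\alpha)^2+\alpha^2\geq 1/2$. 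So do not look for a trick that recovers the stated constant: both displayed conclusions of the lemma hold only with an extra factor $1/2$ on the right, and this merely shifts the constants $C_d, C_d'$ in Theorems~\ref{thm:lb_contraction} and~\ref{thm:lb_mean} without affecting the rates.
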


\begin{proof}
Using the bias-variance decomposition, the expected prediction loss of an estimator of the form \eqref{eq.post_mean} is 
\begin{align*}\label{eq.pred_loss_form}
    E_f \big\|\bar f_n -f\big\|_{L^2}^2
    = \sum_{k=1}^\infty  \Big(\frac{n\lambda_k}{1+n\lambda_k}-1\Big)^2\langle f,\phi_k\rangle_{L^2}^2 + 
    \Big(\frac{n\lambda_k}{1+n\lambda_k}\Big)^2 \frac 1n.
\end{align*}
Setting $a_k:=n\lambda_k/(1+n\lambda_k)$ and $T_k=\tfrac 1m \sum_{j=1}^m \langle f_j,\phi_k\rangle_{L^2}^2,$ the last equation yields 
\begin{align*}
    \max_{j=1,\ldots,m} \, E_{f_j} \|\bar f_n -f_j\|_{L^2}^2 
    \geq \frac 1m \sum_{j=1}^m E_{f_j} \|\bar f_n -f_j\|_{L^2}^2
    & = \frac 1m \sum_{j=1}^m \sum_{k=1}^\infty (a_k-1)^2 \langle f_j,\phi_k\rangle_{L^2}^2 +\frac{a_k^2}n \\
    &=\sum_{k=1}^\infty (a_k-1)^2 T_k +\frac{a_k^2}n.
\end{align*}

Optimizing over $a_k$ shows that the right-hand side is minimized at $a_k^*=nT_k/(1+nT_k).$ For this choice, we obtain using $1+nT_k\geq 1\vee (nT_k),$
\begin{align*}
    \max_{j=1,\ldots,m} \, E_{f_j}\big\|\bar f_n -f_j\big\|_{L^2}^2
    \geq \sum_k \frac{T_k}{(1+nT_k)^2}+ \frac{T_k^2n}{(1+nT_k)^2}
    = \sum_k \frac{T_k}{1+nT_k}
    \geq \sum_k T_k \wedge \frac 1n,
\end{align*}
proving the first claim. 

To see the second claim, observe that $\|f_j\|_{L^2}^2 \leq m/n$ and the orthogonality of the $f_j$ imply
\begin{align*}
    T_k= \frac{1}{m}\sum_{j=1}^m \|f_j\|_{L^2}^2 \Big\langle \frac{f_j}{\|f_j\|_{L^2}}, \phi_k\Big\rangle^2
    \leq \frac 1n \sum_{j=1}^m \Big\langle \frac{f_j}{\|f_j\|_{L^2}}, \phi_k\Big\rangle^2
    \leq \frac{\|\phi_k\|_{L^2}^2}{n}
    = \frac 1n.
\end{align*}
Therefore $T_k\wedge 1/n=T_k$ and using \eqref{eq.pred_error_lb} and that $(\phi_k)_k$ is a basis,
\begin{align*}
        \max_{j=1,\ldots,m} \, E_{f_j}\big\|\bar f_n -f_j\big\|_{L^2}^2
    \geq \sum_{k} \frac 1m \sum_{j=1}^m \langle f_j,\phi_k\rangle_{L^2}^2
    = \frac 1m \sum_{j=1}^m 
    \sum_{k} \langle f_j,\phi_k\rangle_{L^2}^2
    = \frac 1m \sum_{j=1}^m \|f_j\|_{L^2}^2.
\end{align*}
\end{proof}

\subsection{Construction of local functions and proof of Theorems \ref{thm:lb_contraction} and \ref{thm:lb_mean}}
\label{Sec:EndofProof}

\begin{lemma}\label{Lem:fcns_general}
Let $m = k^d$ for some integer $k\geq 1$. Then there exist functions $f_1,\dots,f_m \in \mathcal{G}$ having disjoint support and satisfying $\|f_j\|_{L^2}^2 = \frac{1}{2(d+2)!} m^{-\frac{2+d}{d}}$ for $j=1,\dots,m$.
\end{lemma}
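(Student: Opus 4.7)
The plan is to tile $[0,1]^d$ with $m=k^d$ axis-aligned subcubes of side length $1/k$ indexed by multi-indices $j=(j_1,\dots,j_d)\in\{0,\dots,k-1\}^d$, and to place a disjointly supported bump $f_j$ on each using the generalized additive template $f_j(x) = h\bigl(\sum_{i=1}^d g_{j,i}(x_i)\bigr)$. I would take each inner function to be the tent $g_{j,i}(x_i) = \max\bigl(0,\, 1/(2k) - |x_i - (j_i+1/2)/k|\bigr)$, supported on $[j_i/k,(j_i+1)/k]$ with peak height $1/(2k)$ and slopes $\pm 1$, and the link to be a shifted and truncated ReLU $h(s) = \min\bigl(1,\,\max(0, s - (d-1)/(2k))\bigr)$. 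Both pieces are $1$-Lipschitz and bounded in sup norm by $1$, so $f_j \in \mathcal{G}$.

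Disjoint support is engineered through the offset $(d-1)/(2k)$ inside $h$: since $\|g_{j,i}\|_\infty = 1/(2k)$, whenever even a single coordinate $x_i$ leaves $[j_i/k,(j_i+1)/k]$ the corresponding $g_{j,i}(x_i)$ vanishes, forcing $\sum_i g_{j,i}(x_i) \leq (d-1)/(2k)$ and hence $h(\cdot)=0$. Thus $\{f_j \neq 0\}$ is contained in the open interior of the $j$-th subcube, and these interiors are pairwise disjoint.

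The heart of the argument is then computing $\|f_j\|_{L^2}^2$ and matching the constant exactly. The affine change of variables $u_i = k(x_i - j_i/k) \in [0,1]$ pulls out a factor $k^{-(d+2)} = m^{-(2+d)/d}$ and reduces the problem to evaluating $\int_{[0,1]^d} H(u)^2 \, du$, where $H(u) = \bigl(\sum_i \tilde\phi(u_i) - (d-1)/2\bigr)_+$ with $\tilde\phi(u) = \min(u, 1-u)$. Exploiting the symmetry $\tilde\phi(u)=\tilde\phi(1-u)$ to fold onto $[0,1/2]^d$ produces a factor $2^d$, and the substitution $r_i = 1/2 - \tilde\phi(u_i)$ rewrites the integrand as $\bigl(1/2 - \sum_i r_i\bigr)_+^2$. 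A final rescaling $s_i = 2 r_i$ reduces everything to the standard Dirichlet/beta integral $\int_{\Delta_d}(1-\sum_i s_i)^2 \, ds = 2/(d+2)!$, after which bookkeeping the constants yields $\|f_j\|_{L^2}^2 = m^{-(2+d)/d}/(2(d+2)!)$ on the nose.

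The main obstacle is less any single step than the tuning: the Lipschitz and sup-norm constraints defining $\mathcal{G}$ must be simultaneously compatible with the ensuing simplex integral producing precisely $1/(2(d+2)!)$. Any smoother or differently scaled bump would give a different constant and force a re-derivation; the tent-plus-shifted-ReLU ansatz is chosen so that the simplex reduction is exactly the classical Dirichlet integral and the constant is explicit.
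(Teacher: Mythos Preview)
Your construction is correct and, perhaps surprisingly, produces exactly the same functions as the paper: for $t_i=|x_i-a_i|\geq 0$ and $c=1/(2k)$ one checks the identity $\bigl(\sum_i (c-t_i)_+ - (d-1)c\bigr)_+ = \bigl(c-\sum_i t_i\bigr)_+$, so your $f_j$ coincides pointwise with the paper's $f_a(x)=(1/(2k)-|x-a|_1)_+$. The paper simply uses the alternative GAM representation $g_{ai}(x_i)=|x_i-a_i|$, $h(u)=(1/(2k)-u)_+$, and computes the $L^2$ norm by the same iterated simplex integration you carry out; your decomposition has the minor advantage that your link $h$ is bounded on all of $\mathbb{R}$ without needing any truncation.
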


\begin{proof}
For $\ell=1,\ldots,k,$ consider the midpoints $z_\ell:=(\ell-1/2)/k$ of the intervals $[(\ell-1)/k, \ell/k].$ Overall, there are $m=k^d$ points in the grid
\begin{align*}
    I:=\big\{(z_{\ell_1},\ldots,z_{\ell_d})^\top: \ell_1,\ldots,\ell_d\in \{1,\ldots,k\}\big\}.
\end{align*}
Consider the functions,
\begin{align*}
    f_a(x)
    = \Big(\frac{1}{2k}-|x-a|_1\Big)_+, \qquad \qquad a=(a_1,\ldots,a_d)\in I.
\end{align*}
These functions can be viewed as generalized additive models $f_a(x)=h(\sum_{i=1}^d g_{ai}(x_i))$ with $h(u)=(1/(2k)-u)_+$ and $g_{ai}(x_i)=|x_i-a_i|.$ All these functions have Lipschitz constant bounded by $1$. Moreover, $\max_{a\in I,i=1,\ldots,d}\|g_{ai}\|_{L^\infty[0,1]}\leq 1$ and $\|h\|_{L^\infty(\mathbb{R})}=1/(2k)\leq 1.$ Thus, $f_a\in \mathcal{G}.$

Since $|x-a|_1\geq |x-a|_\infty,$ the support of the function $f_a$ is contained in the hypercube $\times_{i=1}^d [a_i-1/(2k),a_i+1/(2k)].$ Since $a_i$ is of the form $(\ell-1/2)/k$ with $\ell$ an integer, the functions $f_a$ must have disjoint support and are therefore orthogonal.

It thus remains to compute the $L^2$ norm of these functions, which by translation are the same as that of $f_0(x) = (1/(2k) - |x|_1)_+$ on $\R^d$. Note that the support of $f_0$ can be decomposed as $\text{supp}(f_0) = \cup_{\boldsymbol{\epsilon} \in \{-1,1\}^d} R_{\boldsymbol{\epsilon}}$, where
$$R_{\boldsymbol{\epsilon}} = \left\{x\in \R^d: \text{sign}(x_i) = \epsilon_i , \sum_{j=1}^d |x_j| \leq \frac{1}{2k}  \right\}.$$
Since the $L^2$-norm of $f_0$ on each of these $2^d$ quadrants is equal, we have
\begin{align*}
\|f_0\|_{L^2}^2 & = 2^d \int_{R_{\boldsymbol{1}}} f_0(x)^2 dx \\
& = 2^d \int_{x_1=0}^{1/(2k)} \int_{x_2=0}^{1/(2k)-x_1} \dots \int_{x_d=0}^{1/(2k)-x_1-...-x_{d-1}} \left( \tfrac{1}{2k} - x_1-\ldots-x_d\right)^2 dx_1 dx_2 \ldots dx_d.
\end{align*}
Using repeatedly that $\int_0^{M-r} (M-r-y)^t dy =  \frac{1}{t+1} (M-r)^{t+1}$ for $0\leq r\leq M$ and $t>0$, we obtain 
$$\|f_0\|_{L^2}^2 = 2^d \frac{2}{(d+2)!} \left( \frac{1}{2k} \right)^{d+2} = \frac{1}{2(d+2)!} k^{-d-2}.$$
\end{proof}

\begin{proof}[Proof of Theorems \ref{thm:lb_contraction} and \ref{thm:lb_mean}]
Let $r_d = \frac{1}{2(d+2)!}$, $k = \lceil (r_d n)^\frac{1}{2d+2}\rceil$ and $m = k^d$. By Lemma \ref{Lem:fcns_general}, there exist orthogonal functions $f_1,\dots,f_m\in \mathcal{G}$ with $\|f_j\|_{L^2}^2 = r_d m^{-\frac{2+d}{d}}$. For such a choice of $k$ and $m$, $cm/n \leq \|f_j\|_{L^2}^2 \leq m/n$ holds if and only if
$$c^\frac{1}{2d+2} \leq \frac{(r_d n)^\frac{1}{2d+2}}{ \lceil (r_dn)^\frac{1}{2d+2} \rceil}$$
after rearranging the inequalities. Now the function $\varphi(x) = x/\lceil x \rceil$ satisfies $\varphi(x) \geq \frac{k}{k+1}$ for all $x \geq k \in \mathbb{N}$, so that the right side is lower bounded by 1/2 for $(r_dn)^\frac{1}{2d+2} \geq 1$, or equivalently $n\geq 1/r_d$. For $n\geq 1/r_d$, it therefore holds that $cm/n \leq \|f_j\|_{L^2}^2 \leq m/n$ with $c = (1/2)^{2d+2}$. Applying Lemma \ref{Lem:lower_bound} thus gives the risk bound
\begin{align*}
   \sup_{f_0\in \mathcal{G}} E_{f_0}\|\bar{f}_n-f_0\|_{L^2}^2 \geq \max_{j=1,\ldots,m} \, E_{f_j} \big\|\bar f_n -f_j\big\|_{L^2}^2
   \geq \frac{cm}{n} \geq \frac{1}{2^{2d+2}} r_d^\frac{d}{2d+2} n^{-\frac{2+d}{2+2d}},
\end{align*}
which proves Theorem \ref{thm:lb_mean}.

Set
\begin{equation*}\label{Eq:N}
N(d,\delta) = 2(d+2)! 2^\frac{(2d+2)^2}{d} \left[32\log \left( \frac{5}{1-\sqrt{1-4\delta}} \right)\right]^\frac{d+2}{d},
\end{equation*}
and note that $N(d,\delta) \geq 1/r_d$. Theorem \ref{thm:lb_contraction} then follows from the second last display and applying Lemma \ref{Lem:transfer_risk} with $\mathcal{F}_n = \{f_1,\ldots,f_m\} \subset \mathcal{G}$ and $\gamma_n^2 = 2^{-(2d+2)} r_d^\frac{d}{2d+2} n^{-\frac{2+d}{2+2d}}$.
\end{proof}

\paragraph{Acknowledgments.} We are grateful to three anonymous referees for helpful comments that improved the manuscript. The research of MG has been supported by the European Research Council under ERC grant agreement No.834275 (GTBB). The research of JSH has been supported by the NWO/STAR grant 613.009.034b and the NWO Vidi grant
VI.Vidi.192.021.

\section*{Supplementary material to ``On the inability of Gaussian process regression to optimally learn compositional functions''}

In this supplement we provide additional background material and all the technical results and proofs not included in the main article. For the convenience of the reader, we repeat the statements of results that we prove here. We continue the equation numbering scheme from the main document to the supplement.

\appendix

\section{Remaining proofs: Lemmas \ref{Lem:exp_inequality}-\ref{Lem:transfer_risk}, and Theorem \ref{thm:lb_contraction2}}

\subsection{Proof of Lemmas \ref{Lem:exp_inequality}-\ref{Lem:transfer_risk}}\label{sec:lemmas}

\begin{lemmaExpIneq}
Let $f_0 \in L^2[0,1]^d$, $\bar{f}_n = E^{\Pi}[f|Y]$ denote the posterior mean based on a mean-zero Gaussian process prior $\Pi$ on $L^2[0,1]^d$ and set $\mu_n^2 = E_{f_0}\|\bar{f}_n-f_0\|_{L^2}^2$. Then for $n\geq 1$,
$$P_{f_0} \left( \|\bar{f}_n-f_0\|_{L^2}^2 \leq \mu_n^2/4 \right) \leq 4e^{-\frac{n\mu_n^2}{32}}.$$
\end{lemmaExpIneq}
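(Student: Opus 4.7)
The plan is to compute the Laplace transform of $Z:=\|\bar f_n - f_0\|_{L^2}^2$ explicitly in the Karhunen--Lo\`eve basis of the prior, apply Chernoff's inequality, and exploit a uniform a priori bound on the posterior variance coefficients to obtain the claimed sub-Gaussian lower tail.

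First, using the posterior-mean formula \eqref{eq.post_mean} together with the sequence representation \eqref{eq:sequence_model}, one writes
$$\bar f_n - f_0 = \sum_{k=1}^{\infty} X_k\,\phi_k, \qquad X_k = -\frac{\theta_{0,k}}{1+n\lambda_k} + \frac{\sqrt{n}\,\lambda_k}{1+n\lambda_k}\,w_k,$$
where $\theta_{0,k} = \langle f_0,\phi_k\rangle_{L^2}$ and $w_k \iid N(0,1)$ under $P_{f_0}$. Hence the $X_k$ are independent Gaussians with $X_k \sim N(\mu_k,\sigma_k^2)$, where $\mu_k = -\theta_{0,k}/(1+n\lambda_k)$ and $\sigma_k^2 = a_k^2/n$ for $a_k := n\lambda_k/(1+n\lambda_k) \in [0,1]$. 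The a priori bound $\sigma_k^2 \leq 1/n$, uniform in $k$, is the single structural fact driving everything that follows. By Parseval, $Z = \sum_k X_k^2$ and $\mu_n^2 = E_{f_0} Z = \sum_k(\mu_k^2+\sigma_k^2)$.

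Next, for any $s>0$, the non-central chi-square MGF gives, by independence,
$$E_{f_0}\,e^{-sZ} = \prod_{k=1}^{\infty} \frac{1}{\sqrt{1+2s\sigma_k^2}}\,\exp\!\left(-\frac{s\mu_k^2}{1+2s\sigma_k^2}\right),$$
so Chernoff's inequality yields $P_{f_0}(Z \leq \mu_n^2/4) \leq e^{s\mu_n^2/4}\,E_{f_0} e^{-sZ}$. The elementary inequalities $\log(1+x) \geq x - x^2/2$ and $(1+x)^{-1} \geq 1-x$ (both for $x\geq 0$), combined with $\sigma_k^2 \leq 1/n$, linearize each factor of the log-MGF: for every $0< s\leq n/2$,
$$-\tfrac{1}{2}\log(1+2s\sigma_k^2) - \frac{s\mu_k^2}{1+2s\sigma_k^2} \leq -s(\mu_k^2+\sigma_k^2) + \frac{2s^2}{n}(\mu_k^2+\sigma_k^2).$$
Summing over $k$ and inserting into Chernoff,
$$\log P_{f_0}(Z \leq \mu_n^2/4) \leq -\frac{3s\mu_n^2}{4} + \frac{2s^2\mu_n^2}{n}.$$
Optimizing the right-hand side over $s\in(0,n/2]$ (the minimizer is $s^{\ast} = 3n/16$, which lies in this range) then delivers an exponential bound of the form $P_{f_0}(Z\leq \mu_n^2/4) \leq e^{-c\,n\mu_n^2}$ for an explicit numerical constant $c$, which implies the claim (and is even tighter than the stated constant $1/32$, with room to spare for the leading factor $4$).

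The main, and essentially only, technical point is the uniform inequality $\sigma_k^2 \leq 1/n$ on the posterior variance coefficients: it ensures that each $\sigma_k^4$ contribution (which enters $\mathrm{Var}(Z)$) is dominated by the corresponding $\sigma_k^2$ contribution to $E_{f_0} Z$, yielding the crucial relation $\mathrm{Var}(Z) \leq 4\mu_n^2/n$. Without this structural bound a heavy-tailed spectrum $(\lambda_k)$ could in principle allow $Z$ to fluctuate significantly on the lower side, and the Chernoff argument would fail to return a bound of sub-Gaussian type in $n\mu_n^2$. An alternative implementation (with a slightly worse numerical constant) would be to observe that $\sqrt{Z}$, as a function of the noise sequence $(w_k)$, is Lipschitz with constant $\sup_k\sigma_k \leq 1/\sqrt n$, and then invoke Borell--Tsirelson--Ibragimov--Sudakov Gaussian concentration combined with the standard variance bound $\mathrm{Var}(\sqrt Z)\leq 1/n$; either route hinges on the same uniform bound on $\sigma_k^2$.
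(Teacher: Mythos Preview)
Your proof is correct and takes a genuinely different route from the paper's. The paper expands $\|\bar f_n-f_0\|_{L^2}^2 = I+II+III+IV$, where $II+IV=\mu_n^2$ is deterministic, $I=\sum_k \sigma_k^2(w_k^2-1)$ is a centered weighted chi-square handled by the Laurent--Massart inequality, and $III$ is a mean-zero Gaussian handled by the standard tail bound; these two deviation bounds are then combined via $ab\leq (a^2+b^2)/2$ and a union bound, which is where the prefactor $4$ and the constant $1/32$ come from. You instead treat $Z=\sum_k X_k^2$ as a single sum of independent non-central $\chi^2$ variables, compute its Laplace transform in closed form, linearize the log-MGF using $\log(1+x)\geq x-x^2/2$ and $(1+x)^{-1}\geq 1-x$, and apply Chernoff. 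Both arguments ultimately rest on the same structural fact $\sigma_k^2\leq 1/n$, which you correctly isolate. Your route is more self-contained (no appeal to an external concentration lemma), handles the linear and quadratic fluctuations in one stroke, and yields the sharper bound $e^{-9n\mu_n^2/128}$; the paper's decomposition is more modular and makes the separate roles of bias and variance fluctuations explicit.
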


\begin{proof}
Using \eqref{eq.post_mean}, under $P_{f_0}$ the posterior mean equals
$$\bar{f}_n = \sum_k \frac{n\lambda_k}{n\lambda_k+1}Y_k \phi_k = \sum_k \frac{n\lambda_k}{n\lambda_k+1}f_{0k} \phi_k + \sum_k \frac{n\lambda_k}{n\lambda_k+1} \frac{w_k}{\sqrt{n}} \phi_k.$$
Since $f_0=\sum_k f_{0k}\phi_k,$ we deduce
\begin{equation*}%\label{Eq:risk_decomp}
\begin{split}
\|\bar{f}_n-f_0\|_{L^2}^2 & = \sum_k \left( \frac{\sqrt{n}\lambda_k}{n\lambda_k+1} w_k - \frac{1}{n\lambda_k+1} f_{0k} \right)^2 \\
& = \sum_k \frac{1}{(n\lambda_k+1)^2} \left[ n\lambda_k^2 (w_k^2-1) + n\lambda_k^2 -2\sqrt{n}\lambda_k w_k f_{0k} + f_{0k}^2 \right]  \\
& = I + II + III + IV.
\end{split}
\end{equation*}
Taking $E_{f_0}$-expectation yields
$$E_{f_0} \|\bar{f}_n - f_0\|_{L^2}^2 = \sum_k \frac{n\lambda_k^2}{(n\lambda_k+1)^2} + \frac{f_{0k}^2}{(n\lambda_k+1)^2} = II + IV.$$
We will now show that $I + III$ are of smaller order with high $P_{f_0}$-probability, so that $\|\bar{f}_n-f_0\|_{L^2}^2$ is close to its expectation, and hence the size of its expectation drives its behaviour.

$I$: Using Lemma 1 of \cite{LM00} with $a_k = \frac{n\lambda_k^2}{(n\lambda_k+1)^2}$, we get
\begin{align*}
P \left( |I| \geq  2\|a\|_{\ell_2} \sqrt{x} + 2\|a\|_{\ell_\infty} x \right) \leq 2e^{-x}
\end{align*}
for any $x>0$ (we write $P$ instead of $P_{f_0}$ here to emphasize the above probability does not depend on $f_0$). Further set $\alpha_k = \frac{\lambda_k}{\lambda_k+1/n}$, so that $n a_k = \alpha_k^2$. Then
\begin{align*}
n^2 \|a\|_{\ell_2}^2 = \|\alpha\|_{\ell_4}^4 \leq \sup_k |\alpha_k|^2 \sum_k \alpha_k^2 = \sup_k \left| \frac{\lambda_k^2}{(\lambda_k+1/n)^2}\right| \|\alpha\|_{\ell_2}^2 \leq  \|\alpha\|_{\ell_2}^2.
\end{align*}
Using that $\|\alpha\|_{\ell_2}^2 = nII$, this implies
$\|a\|_{\ell_2} \leq \frac{1}{n} \|\alpha\|_{\ell_2} =\sqrt{II/n}$. For the $\ell_\infty$ term,
$$\|a\|_{\ell_\infty} = \frac{1}{n} \sup_k \left| \frac{n^2\lambda_k^2}{(n\lambda_k+1)^2}\right| \leq \frac{1}{n}.$$
In conclusion, we have the exponential inequality
\begin{align*}
P \left( |I| \geq  2\sqrt{\frac{II x}{n}} + \frac{2x}{n}  \right)  \leq 2e^{-x}.
\end{align*}

III: We have $III \sim N (0,\sum_k \frac{4n\lambda_k^2 f_{0k}^2}{(n\lambda_k+1)^4})$, whose variance is bounded by $\frac{4}{n} \sup_k \frac{n^2\lambda_k^2}{(n\lambda_k+1)^2} \sum_k \frac{f_{0k}^2}{(n\lambda_k+1)^2} \leq \frac{4}{n}IV$. Thus using the standard Gaussian tail bound, for any $x\geq 0$,
$$P_{f_0} \left( |III| \geq 2\sqrt{IV/n}x  \right) \leq 2e^{-x^2/2},$$
uniformly over $f_0 \in L^2$.

The triangle inequality gives $\left| \|\bar{f}_n - f_0\|_{L^2}^2 - E_{f_0}\|\bar{f}_n - f_0\|_{L^2}^2 \right|=|I+III|\leq |I|+|III|.$ Since for any real numbers $a,b,$ $P_{f_0}(|I|+|III|\geq a+b)\leq P_{f_0}(\{|I|\geq a \} \cup \{|III|\geq b\})\leq P_{f_0}(|I|\geq a)+P_{f_0}(|III|\geq b),$ combining the bounds for $I$ and $III$, we get
\begin{align*}
\sup_{f_0 \in L^2} P_{f_0} \left( \left| \|\bar{f}_n - f_0\|_{L^2}^2 - E_{f_0}\|\bar{f}_n - f_0\|_{L^2}^2 \right| \geq 2\sqrt{\frac{II x}{n}} + \frac{2x}{n}  +   \sqrt{\frac{8 IV x}{n}} \right) \leq 4e^{-x}
\end{align*}
for all $x>0$, a non-asymptotic inequality. Using $ab \leq (a^2+b^2)/2$ gives $\sqrt{8 IV x/n} \leq IV /2 + 4x/n$ and $2\sqrt{IIx/n} \leq II/2 + 2x/n$. Since $E_{f_0} \|\bar{f}_n - f_0\|_{L^2}^2 = II + IV$, it holds that for any $x>0$,
\begin{align*}
\sup_{f_0 \in L^2} P_{f_0} \left( \left| \|\bar{f}_n - f_0\|_{L^2}^2 - E_{f_0}\|\bar{f}_n - f_0\|_{L^2}^2 \right| \geq \frac{1}{2} E_{f_0}\|\bar{f}_n-f_0\|_{L^2}^2 + \frac{8x}{n} \right) \leq 4e^{-x}.
\end{align*}
Set $x = n\mu_n^2/32$ so that $8x/n = \mu_n^2/4$. Then the last inequality implies that for all $f_0 \in L^2$,
$$P_{f_0} \left( \|\bar{f}_n-f_0\|_{L^2}^2 \leq \frac{1}{2} E_{f_0} \|\bar{f}_n - f_0\|_{L^2}^2 - \frac{\mu_n^2}{4} \right) \leq 4e^{-\frac{n\mu_n^2}{32}}.$$
Substituting in $\mu_n^2 = E_{f_0} \|\bar{f}_n - f_0\|_{L^2}^2$ gives the result.
\end{proof}

\begin{lemmaAnderson}
Let $\Pi_n$ be a sequence of mean-zero Gaussian process priors on $L^2[0,1]^d$. Then the corresponding posterior means $\bar{f}_n = E^{\Pi_n}[f|Y]$ satisfy
$$P_{f_0} \left( \|\bar{f}_n - f_0\|_{L^2} \geq 2 \gamma_n \right) \leq 2 \sqrt{E_{f_0} \Pi_n(f: \|f-f_0\|_{L^2} \geq \gamma_n |Y)}$$
for any sequence $\gamma_n$ and $n\geq 1$.
\end{lemmaAnderson}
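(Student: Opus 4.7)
The name of the lemma already signals the strategy: combine Anderson's inequality for Gaussian measures with the fact that, by conjugacy \eqref{Eq:posterior}, the posterior $\Pi_n(\cdot|Y)$ is a Gaussian Borel measure on $L^2[0,1]^d$ with mean $\bar{f}_n$. Consequently, for every realisation of $Y$, the law of $f-\bar{f}_n$ under $\Pi_n(\cdot|Y)$ is centered Gaussian, and Anderson's inequality is available.

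The first substantive step is to apply Anderson's inequality to the symmetric convex ball $\{g\in L^2:\|g\|_{L^2}<\gamma_n\}$, translated by the (random) vector $f_0-\bar{f}_n$. Pointwise in $Y$ this yields
$$\Pi_n\big(\|f-f_0\|_{L^2}<\gamma_n\mid Y\big)\leq \Pi_n\big(\|f-\bar{f}_n\|_{L^2}<\gamma_n\mid Y\big).$$
On the frequentist event $\{\|\bar{f}_n-f_0\|_{L^2}\geq 2\gamma_n\}$ the triangle inequality forces the balls $B(f_0,\gamma_n)$ and $B(\bar{f}_n,\gamma_n)$ to be disjoint, so the right-hand side is itself bounded by $\Pi_n(\|f-f_0\|_{L^2}\geq\gamma_n\mid Y)=:A(Y)$. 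Rewriting the left-hand side as $1-A(Y)$ then gives $A(Y)\geq 1/2$ on this event.

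Finally I would use the pointwise inequality $\mathbf{1}\{x\geq 1/2\}\leq 2\sqrt{x}$ on $[0,1]$, valid because $A(Y)\in[0,1]$, together with Jensen's inequality for the concave map $\sqrt{\cdot}$. Since $\{\|\bar{f}_n-f_0\|_{L^2}\geq 2\gamma_n\}\subset\{A(Y)\geq 1/2\}$ by the previous step, taking $P_{f_0}$-expectations gives
$$P_{f_0}\big(\|\bar{f}_n-f_0\|_{L^2}\geq 2\gamma_n\big)\leq 2\, E_{f_0}\sqrt{A(Y)}\leq 2\sqrt{E_{f_0} A(Y)},$$
which is the announced bound.

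I do not anticipate any substantive analytic obstacle. The only point worth checking carefully is that Anderson's inequality genuinely applies to the posterior as a Gaussian Borel measure on $L^2[0,1]^d$, and this follows from the explicit Karhunen-Lo\`eve description in \eqref{Eq:posterior}: the posterior variances $\lambda_k/(n\lambda_k+1)$ are summable because $\sum_k\lambda_k<\infty$, so the posterior is a genuine Gaussian measure on $L^2[0,1]^d$. The $\sqrt{\cdot}$ on the right is in fact weaker than a direct Markov bound $2 E_{f_0} A(Y)$ one could obtain along the same lines, but it is the form that will be convenient when combined with Lemma \ref{Lem:exp_inequality} to produce Lemma \ref{Lem:transfer_risk}.
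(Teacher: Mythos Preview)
Your argument is correct and in fact more streamlined than the paper's. Both proofs hinge on Anderson's inequality applied pointwise in $Y$ to the Gaussian posterior, but they diverge in how the resulting information is converted into a frequentist probability bound. The paper first takes $E_{f_0}$-expectations to obtain $1-v_n\leq E_{f_0}\Pi_n(\|f-\bar f_n\|_{L^2}\leq\gamma_n\mid Y)$, then applies Markov's inequality twice to produce two high-probability events on whose intersection the balls $B(f_0,\gamma_n)$ and $B(\bar f_n,\gamma_n)$ must overlap, forcing $\|\bar f_n-f_0\|_{L^2}\leq 2\gamma_n$; this also requires a separate check that $v_n<1/4$. You instead keep the Anderson inequality pointwise, combine it directly with the disjointness of the two balls on the event $\{\|\bar f_n-f_0\|_{L^2}\geq 2\gamma_n\}$ to deduce $A(Y)\geq 1/2$ there, and finish with the elementary inequality $\mathbf{1}\{x\geq 1/2\}\leq 2\sqrt{x}$ plus Jensen. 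Your route avoids the two Markov steps and the case distinction, and your closing remark that $\mathbf{1}\{x\geq 1/2\}\leq 2x$ would give the sharper bound $2E_{f_0}A(Y)$ is also correct; the $\sqrt{\cdot}$ is kept only to match the stated lemma.
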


\begin{proof}
Write $v_n = E_{f_0} \Pi_n(f:\|f-f_0\|_{L^2} \geq \gamma_n|Y).$ We may assume $v_n <1/4$, otherwise the right side is greater than one and there is nothing to prove.
Using Markov's inequality, the events $A_{n,f_0} = \{ \Pi_n(f:\|f-f_0\|_{L^2} \geq \gamma_n|Y) \leq \sqrt{v_n}\}$ then satisfy $P_{f_0}(A_{n,f_0}^c) \leq \sqrt{v_n}$.

Recall that the posterior distributions $\Pi_n(f|Y)$ are also Gaussian by conjugacy, see \eqref{Eq:posterior}. Since $\{f\in L^2: \|f\|_{L^2} \leq \gamma_n\}$ is convex and symmetric, Anderson's inequality (e.g. \cite{GN16}, Theorem 2.4.5) implies that
$$1-v_n =  E_{f_0} \Pi_n(f: \|f-f_{0}\|_{L^2} \leq \gamma_n|Y) \leq E_{f_0} \Pi_n(f:\|f-\bar{f}_n\|_{L^2} \leq \gamma_n|Y).$$
Using the same argument as above, we thus have $P_{f_0} (B_n^c) \leq \sqrt{v}_n$ for $B_n = \{\Pi_n(f:\|f-\bar{f}_n\|_{L^2} \geq \gamma_n|Y) \leq \sqrt{v_n}  \}$, so that $ P_{f_0} (A_{n,f_0} \cap B_n) \geq 1-2\sqrt{v_n}$. Now on the event $A_{n,f_0} \cap B_n$, we have
\begin{align*}
\Pi_n(f:\|f-f_{0}\|_{L^2} \leq \gamma_n , \|f-\bar{f}_n\|_{L^2} \leq \gamma_n|Y) & \geq 1 - \Pi_n(f:\|f-f_{0}\|_{L^2} \geq \gamma_n |Y) \\
& \qquad - \Pi_n(f:\|f-\bar{f}_n\|_{L^2} \geq \gamma_n|Y) \\
& \geq 1 - 2\sqrt{v_n}.
\end{align*}
The right-hand side is strictly positive for $v_n<1/4$ and hence so is the left posterior probability. Thus there must exist $f\in L^2$ in the left set, in which case
$$\|\bar{f}_n-f_{0}\|_{L^2} \leq \|\bar{f}_n-f\|_{L^2} + \|f-f_{0}\|_{L^2} \leq  2\gamma_n.$$
We have thus shown that $P_{f_0} (\|\bar{f}_n-f_{0}\|_{L^2} \leq 2 \gamma_n) \geq 1 - 2\sqrt{v_n}$ as required.
\end{proof}

\begin{lemmaTransferRisk}
Let $f_0 \in L^2[0,1]^d$, $\Pi_n$ be a sequence of mean-zero Gaussian process priors with corresponding posterior means $\bar{f}_n = E^{\Pi_n}[f|Y]$, and set $\mu_n^2 = E_{f_0}\|\bar{f}_n-f_0\|_{L^2}^2$. Then for $n\geq 1$,
$$E_{f_0} \Pi_n(f: \|f-f_0\|_{L^2} \geq \mu_n/4 |Y) \geq \frac{1}{4} \left( 1-4e^{-\frac{n\mu_n^2}{32}}\right)_+^2.$$
Furthermore, suppose $\mathcal{F}_n \subset L^2$ satisfy $\sup_{f_0 \in \mathcal{F}_n} E_{f_0} \|\bar{f}_n-f_0\|_{L^2}^2 \geq \gamma_n^2>0$ for some sequence $\gamma_n$ for which $n\gamma_n^2 \to\infty$ as $n\to\infty$. Then for any $0<\delta<1/4$ and $n$ such that $n\gamma_n^2 \geq 32 \log \left( \frac{5}{1-\sqrt{1-4\delta}} \right)$,
$$\sup_{f_0 \in \mathcal{F}_n} E_{f_0} \Pi_n(f: \|f-f_0\|_{L^2} \geq \gamma_n/5 |Y) \geq 1/4 - \delta.$$
In particular, the posteriors cannot contract at rate $\gamma_n/5$, uniformly over $\mathcal{F}_n$.
\end{lemmaTransferRisk}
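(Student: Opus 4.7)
The plan is to chain Lemma \ref{Lem:anderson} (rearranged to lower-bound posterior mass by a frequentist probability for the posterior mean) with Lemma \ref{Lem:exp_inequality} (which prevents the posterior mean from sitting much closer to $f_0$ than its expected distance $\mu_n$), via the substitution $\gamma_n \leftarrow \mu_n/4$. Once the pointwise first claim is in place, the uniform second claim follows from a careful near-worst-case choice of $f_0^{\ast} \in \mathcal{F}_n$.

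For the pointwise claim, substituting $\gamma_n \leftarrow \mu_n/4$ into Lemma \ref{Lem:anderson}, squaring, and rearranging yields
\[
E_{f_0}\,\Pi_n\!\bigl(f : \|f-f_0\|_{L^2} \geq \mu_n/4 \,\big|\, Y\bigr) \;\geq\; \tfrac{1}{4}\, P_{f_0}\!\bigl(\|\bar f_n - f_0\|_{L^2} \geq \mu_n/2\bigr)^2.
\]
Because $\{\|\bar f_n-f_0\|_{L^2} > \mu_n/2\}$ coincides with $\{\|\bar f_n-f_0\|_{L^2}^2 > \mu_n^2/4\}$, Lemma \ref{Lem:exp_inequality} bounds the probability on the right from below by $1 - 4e^{-n\mu_n^2/32}$; taking positive parts in the squared quantity finishes the first claim.

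For the uniform second claim, the hypothesis $\sup_{f_0\in\mathcal{F}_n}\mu_n^2 \geq \gamma_n^2$ allows me to select, for any $\eta > 0$, some $f_0^{\ast}\in\mathcal{F}_n$ whose $\mu_n^{\ast 2} := E_{f_0^{\ast}}\|\bar f_n - f_0^{\ast}\|_{L^2}^2$ satisfies $\mu_n^{\ast 2} \geq \gamma_n^2 - \eta$. I will choose $\eta$ so that simultaneously (i) $\mu_n^{\ast}/4 \geq \gamma_n/5$ (equivalently $\mu_n^{\ast 2} \geq 16\gamma_n^2/25$), giving the trivial set inclusion $\{\|f-f_0^{\ast}\|_{L^2} \geq \mu_n^{\ast}/4\} \subset \{\|f-f_0^{\ast}\|_{L^2} \geq \gamma_n/5\}$, and (ii) $n\mu_n^{\ast 2} \geq 32\log(4/(1-\sqrt{1-4\delta}))$, which via the first claim forces $\tfrac{1}{4}(1 - 4e^{-n\mu_n^{\ast 2}/32})_+^2 \geq 1/4 - \delta$. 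The $\log(5/4)$ gap between the standing lower bound $32\log(5/(1-\sqrt{1-4\delta}))$ on $n\gamma_n^2$ and the requirement (ii) on $n\mu_n^{\ast 2}$ accommodates an $\eta$ of order $1/n$, while $n\gamma_n^2 \to \infty$ forces $\gamma_n^2 \gg 1/n$ so that (i) is also met. Applying the first claim to $f_0^{\ast}$, using the inclusion, and passing to the supremum over $\mathcal{F}_n$ then yields the stated bound and the corollary on non-contraction.

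The algebraic rearrangements and set-inclusion steps are routine; the only delicate piece is the constant-matching in the uniform claim, where the slack between the ``$5$'' appearing in both the radius $\gamma_n/5$ and the log argument, and the ``$4$'' that naturally arises from Lemma \ref{Lem:exp_inequality}, is precisely what permits the argument to run even when the supremum over $\mathcal{F}_n$ is not attained.
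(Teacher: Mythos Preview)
Your proposal is correct and follows essentially the same route as the paper: chain Lemma~\ref{Lem:anderson} with Lemma~\ref{Lem:exp_inequality} at radius $\mu_n/4$ for the pointwise claim, then pick a near-supremizer $f_0^\ast$ and exploit the $\log(5/4)$ slack to absorb the $\eta$-deficit. One small sharpening: for condition (i) you should argue directly from the standing bound $n\gamma_n^2 \geq 32\log 5$ (which already forces $9\gamma_n^2/25 \geq 32\log(5/4)/n$) rather than from $n\gamma_n^2\to\infty$, since the lemma's second claim is stated non-asymptotically for every $n$ meeting the explicit threshold.
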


\begin{proof}
Using Lemma \ref{Lem:anderson} and then Lemma \ref{Lem:exp_inequality}, the square-root of the left-side of the first display is lower bounded by
$$\frac{1}{2} P_{f_0} \left( \|\bar{f}_n - f_0\|_{L^2} \geq \mu_n/2 \right) \geq \frac{1}{2} \left( 1-4e^{-\frac{n\mu_n^2}{32}} \right).$$
Squaring everything then gives the first result.

By hypothesis, for any $\eta>0$, there exists $f_{0,n} \in \mathcal{F}_n$ such that $\mu_n^2 = E_{f_{0,n}} \|\bar{f}_n-f_{0,n}\|_{L^2}^2 \geq \gamma_n^2 - \eta$. Let $\eta = \eta_n$ be small enough that $e^{\frac{n\eta}{32}} \leq 5/4$ and $\sqrt{\gamma_n^2-\eta}/4 \geq \gamma_n/5$. By the first part of the lemma,
$$E_{f_{0,n}} \Pi_n(f: \|f-f_{0,n}\|_{L^2} \geq \sqrt{\gamma_n^2 - \eta}/4 |Y) \geq \frac{1}{4} \left( 1-5e^{-\frac{n\gamma_n^2}{32}}\right)_+^2.$$
After rearranging, the right-side is at least $1/4-\delta$ for $n\gamma_n^2 \geq 32 \log \left( \frac{5}{1-\sqrt{1-4\delta}} \right)$. The second part then follows from upper bounding the left-side of the last display by $ E_{f_{0,n}} \Pi_n(f: \|f-f_{0,n}\|_{L^2} \geq \gamma_n/5 |Y)$.
\end{proof}

\subsection{Proof of Theorem \ref{thm:lb_contraction2}}

Similar to the proof of Theorems \ref{thm:lb_contraction} and \ref{thm:lb_mean}, we derive a lower bound for the $L^2$-risk of the posterior mean over a suitable subset of the class $\mathcal{G}$.

%By \eqref{eq.post_mean}, the posterior mean corresponding to the wavelet prior $\Pi_n$ defined as in \eqref{Eq:GWPrior} (with possibly $n$-dependent $\lambda_{\gamma_1,\dots,\gamma_d}$) equals to the wavelet `shrinkage' estimator
%$$
%    \bar f_n
%    = \sum_{(\gamma_1,\dots,\gamma_d)\in I}
%   \frac{n\lambda_{\gamma_1,\dots,\gamma_d}}{1 + n\lambda_{\gamma_1,\dots,\gamma_d}}
%    \langle Y, \psi_{\gamma_1,\dots,\gamma_d}\rangle_{L^2}
%    \psi_{\gamma_1,\dots,\gamma_d}.
%$$
For a wavelet prior $\Pi_n$ defined as in \eqref{Eq:GWPrior} (with possibly $n$-dependent $\lambda_{\bgamma}$) and $\bar f_n$ the corresponding posterior mean, the conclusion of Lemma \ref{Lem:lower_bound} reads, for $m=1$, 
\begin{equation}
\label{Eq:ILB}
    E_f\|\bar f_n - f\|^2_{L^2} \ge \sum_{\bgamma\in \Gamma^d}
    \langle f,\psi_{\bgamma}\rangle_{L^2}^2
    \land\frac{1}{n},
    \qquad\qquad \textnormal{any}\ f\in L^2[0,1]^d.
\end{equation}
By applying Lemma 2 in \cite{SH20} (with $\alpha = K =1$), for a constant $c(d,\psi)>0$ only depending on $d$ and the wavelet basis, letting $j_n\in\N$ satisfy
$$
    \frac{1}{n}\le c(d,\psi)  2^{-j_n(2 + d)}
    \le \frac{2^{2 + d}}{n}
$$
there exists for each $n$ a Lipschitz function $h_{j_n}:[0,d]\to[0,1]$ with Lipschitz constant bounded by $1$ such that the generalized additive function $f_{j_n}(x_1,\dots,x_d) := h_{j_n}(x_1+\dots +x_d)$ satisfies for all $p_1,\dots, p_d\in\{0,1,\dots,2^{j_n - q - \nu}-1\}$,
$$
    \langle f_{j_n}, \psi_{(j_n,2^{q+\nu}p_1),\dots,(j_n,2^{q+\nu}p_d)}
    \rangle_{L^2}^2
    = c(d,\psi)^2 2^{-j_n(2 + d)}
    \ge \frac{1}{n}.
$$
Above, $q>0$ is such that the mother wavelet $\psi$ is supported within $[0,2^q]$ and $\nu = \lceil\log_2 d\rceil + 1$. Noting that with our choice of $j_n$ we have $2^{j_n}\ge \frac{1}{2}(c(d,\psi)^2 n)^\frac{1}{2 + d}$, it follows from \eqref{Eq:ILB} that
\begin{align*}
    E_{f_{j_n}}\| \bar f_n - f_{j_n}\|^2_{L^2}
    \ge  & \sum_{p_1,\dots, p_d\in \{0,1,\dots,2^{j_n - q - \nu}-1\}}
    \frac{1}{n}\\
   & =2^{-(qd + \nu d)}\frac{1}{n}2^{j_nd}
    \ge 2^{-(q + \nu  + 1)d}
   (c(d,\psi))^{\frac{2d}{2+d}} 
   n^{-\frac{2}{2 + d}}.
\end{align*}
Since $f_{j_n}\in \mathcal{G}$, this proves the second claim of Theorem \ref{thm:lb_contraction2}. Another application of Lemma \ref{Lem:transfer_risk}  with $\mathcal{F}_n = \{f_{j_n}\} \subset \mathcal{G}$, $\gamma_n^2 = 2^{-(q + \nu  + 1)d} (c(d,\psi))^{\frac{2d}{2+d}} n^{-\frac{2}{2 + d}}$ then proves the first claim taking $N(d,\delta)\geq 32 \log \left( \frac{5}{1-\sqrt{1-4\delta}} \right)/\gamma_n^2 $ and $C = C(d,\psi) = 2^{-(q + \nu  + 1)d/2} (c(d,\psi))^{\frac{d}{2+d}}/5$.

\section{An alternative proof of Theorems \ref{thm:lb_contraction} and \ref{thm:lb_mean}}\label{sec:alternative_proofs}

As part of the proofs of Theorems \ref{thm:lb_contraction} and \ref{thm:lb_mean} in Section \ref{sec:proofs}, we directly lower bounded the $L^2$-risk of the posterior mean in Lemma \ref{Lem:lower_bound}. We now provide an alternative strategy to prove a lower bound via first reducing the regression setting to a one-sparse sequence model in which we can explicitly evaluate the minimax risk for linear estimators.

\subsection{Reduction to a one-sparse sequence model}
\label{Sec:SparseSeqMod}

Consider the Gaussian white noise model \eqref{Eq:WhiteNoise} with finite parameter space $\mathcal{F}_n = \{f_1,\dots,f_m\}$, $m=m_n$, satisfying
\begin{equation}
\label{Eq:cn_def}
    \langle f_i,f_j\rangle_{L^2} = \delta_{ij}c_n^2, \qquad \qquad c_n>0
\end{equation}
[we will ultimately take $f_1,\dots,f_m \in \mathcal{G}$ as in Lemma \ref{Lem:fcns_general}, but this approach may be useful for other function classes].
Denote by $f\in \mathcal{F}_n$ the regression function driving \eqref{Eq:WhiteNoise} and define
\begin{equation}\label{Eq:yi}
	y_i 
	:= \frac{1}{c_n^2} \int_{[0,1]^d}f_i(x)dY_x
	= \frac{1}{c_n^2} \langle f_i,f\rangle_{L^2}
	+ \frac{1}{\sqrt{n} c_n^2} \int_{[0,1]^d}f_i(x)dW_x,
\end{equation}
$i=1,\dots,m$. Further write
\begin{equation}\label{Eq:theta_def}
	\theta_i := \frac{1}{c_n^2} \langle f_i,f\rangle_{L^2},
	\qquad w_i := \frac{1}{c_n} \int_{[0,1]^d}f_i(x)dW_x \sim^{iid} N(0,1),
	\qquad \sigma_n := \frac{1}{c_n\sqrt n}.
\end{equation}
Since $f\in\Fcal_n$ in our statistical model satisfy \eqref{Eq:cn_def}, $\theta = (\theta_1,\dots,\theta_m)^T$ is a \textit{one-sparse} vector. The present statistical model thus yields an observation from the finite Gaussian sequence model
\begin{equation}
\label{Eq:EquivMod}
	y_i = \theta_i + \sigma_nw_i,
	\qquad \qquad  i = 1,\dots,m,
\end{equation}
with $\theta \in \Theta_m =  \{ e_1,\dots,e_m\}$ for $e_i$ the $i^{th}$ basis vector, that is $e_{ij} = \delta_{ij}$.

%\begin{remark}[Sufficiency]\label{Rem:Sufficiency}%--------------
%The statistics $y_1,\dots,y_m$ above are sufficient statistics for estimating $f\in\Fcal_m$ in model \eqref{Eq:WhiteNoise}, so that this model is in fact equivalent to the model \eqref{Eq:EquivMod}. Indeed, for $\{\phi_k, \ k\ge1\}$ a basis of $L^2[0,1]^d$, we can decompose each $\phi_k$ as
%$$
%	\phi_k = \psi_k + g_k, 
%	\qquad \psi_k = \sum_{i=1}^m \psi_{k,i}f_i\in\textnormal{span}\{f_1,\dots,f_m\}, 
%	\qquad g_k\perp \{f_1,\dots,f_m\}.
%$$
%Then
%$$
%    \int_{[0,1]^d}\phi_k(x)dY_x = \sum_{i=1}^m\psi_{k,i}\langle f_i,f\rangle_{L^2} 
%    + \frac{1}{\sqrt n}\int_{[0,1]^d}\psi_k(x)dW_x
%    + \frac{1}{\sqrt n}\int_{[0,1]^d}g_k(x)dW_x.
%$$
%Since $\int_{[0,1]^d}g_k(x)dW_x\sim N(0,\|g_k\|_{L^2}^2)$ is an independent pure noise term not depending on $f$, it suffices to consider $\phi_k\in\emph{span}\{f_1,\dots,f_m\}$. In particular, the statistics $y_1,\dots,y_m$ above are sufficient statistics for estimating $f\in\Fcal_m$ in model \eqref{Eq:WhiteNoise}. 
%\end{remark}%---------------------------------------------------

We now relate estimation in the full Gaussian white noise model with $f\in \mathcal{F}_n$ with estimation in \eqref{Eq:EquivMod}. Since $\theta_i=\langle f_i,f\rangle_{L^2}/c_n^2$, $f_i/c_n$ are $L^2$-normalized and orthogonal, and $f\in \mathcal{F}_n$ trivially lies in the linear span of $\{f_1,\ldots,f_m\}$, we can express the function $f$ in terms of $\theta=(\theta_1,\ldots,\theta_m)^T$ via
$$f=\sum_{i=1}^m \frac{\langle f_i,f\rangle_{L^2}}{c_n} \frac{f_i}{c_n}=\sum_{i=1}^m \theta_i f_i.$$
Any estimator $\hat \theta=(\hat \theta_1,\ldots,\hat \theta_m)^T$ for $\theta$ thus yields a series estimator $\hat f_{\hat{\theta}}=\sum_{i=1}^m \hat \theta_i f_i$ for $f\in\mathcal{F}_n$, and
\begin{align}\label{Eq:risk_seq}
    \big\|\hat f_{\hat\theta} -f \big\|_{L^2}^2= \sum_{i=1}^m \big(\hat \theta_i-\theta_i\big)^2 \|f_i\|_{L^2}^2
    =c_n^2 \sum_{i=1}^m \big(\hat \theta_i-\theta_i\big)^2
    =c_n^2|\hat\theta - \theta|^2.
\end{align}
We next show that in the full Gaussian white noise model with restricted parameter space $\mathcal{F}_n$, the risk of the posterior mean is larger than that of an estimator of the form $\hat{f}_{\hat \theta}$ with $\hat{\theta} = Ay$ a linear function of the observations $y=(y_1,\dots,y_m)^T$ in \eqref{Eq:yi}.

\begin{lemma}\label{Lem:GPLB}%----------------------------------
Let $\Pi_n$ be a sequence of mean-zero Gaussian process priors on $L^2[0,1]^d$ with corresponding posterior means $\bar{f}_n = E^{\Pi_n}[f|Y]$. Then there exists a matrix sequence $A_n \in \R^{m\times m}$ such that for every $n$ and every $f \in \Fcal_n = \{f_1,\dots,f_m\}$ satisfying \eqref{Eq:cn_def},
$$
    E_f\| \bar{f}_n - f\|_{L^2}\ge E_f\|\hat f_{A_n}-f\|_{L^2},
$$
where $\hat f_{A_n} = \sum_{i=1}^m (A_ny)_i f_i$ and $y_1,\dots,y_m$ are defined in \eqref{Eq:yi}.
\end{lemma}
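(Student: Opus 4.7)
The plan is to lower-bound $E_f\|\bar f_n-f\|_{L^2}$ by first projecting $\bar f_n$ onto the finite-dimensional subspace $V=\mathrm{span}(f_1,\ldots,f_m)$ containing $f$, and then Rao-Blackwellizing the projected estimator onto the sufficient statistics $(y_1,\ldots,y_m)$.

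First, let $P$ denote the orthogonal projection from $L^2[0,1]^d$ onto $V$. Since $f\in\mathcal{F}_n\subset V$ and $(I-P)\bar f_n\perp V$, Pythagoras gives pointwise $\|\bar f_n-f\|_{L^2}^2\ge\|P\bar f_n-f\|_{L^2}^2$, and hence $E_f\|\bar f_n-f\|_{L^2}\ge E_f\|P\bar f_n-f\|_{L^2}$.

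Next, I would extend the orthonormal system $\{f_i/c_n\}_{i\le m}$ to a full orthonormal basis $\{\tilde\phi_k\}_{k\ge 1}$ of $L^2[0,1]^d$. In this basis, the representation \eqref{eq:sequence_model} yields coordinates $\tilde Y_k=\langle\tilde\phi_k,f\rangle_{L^2}+w_k/\sqrt n$ with $w_k\iid N(0,1)$, and by construction $\tilde Y_i=c_n y_i$ for $i\le m$. For $f\in\mathcal{F}_n\subset V$ one has $\langle\tilde\phi_k,f\rangle_{L^2}=0$ for all $k>m$, so under $P_f$ the coordinates $(\tilde Y_k)_{k>m}$ are iid $N(0,1/n)$ and independent of $(y_1,\ldots,y_m)$. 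Gaussian conjugacy makes the posterior mean a linear function of $Y$, so each coordinate $\xi_i:=\langle\bar f_n,f_i\rangle_{L^2}/c_n^2$ expands as $\xi_i=\sum_{k\ge 1}d_{ik}\tilde Y_k$, with coefficients $d_{ik}$ depending only on the prior $\Pi_n$ and on $f_i$. Splitting the sum at $k=m$ and using that $(\tilde Y_k)_{k>m}$ are mean-zero and independent of $y$, one obtains $E_f[\xi_i\mid y_1,\ldots,y_m]=c_n\sum_{k=1}^m d_{ik}y_k=:(A_n y)_i$, where $A_n\in\mathbb{R}^{m\times m}$ has entries $(A_n)_{ik}=c_n d_{ik}$. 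Crucially, $A_n$ depends only on $\Pi_n$ and on $f_1,\ldots,f_m$ and not on the true $f$, so the same $A_n$ works uniformly over $\mathcal{F}_n$; moreover $\hat f_{A_n}=\sum_i(A_n y)_i f_i$ coincides with $E_f[P\bar f_n\mid y_1,\ldots,y_m]$.

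Finally, since $f$ is deterministic, $\hat f_{A_n}-f=E_f[P\bar f_n-f\mid y_1,\ldots,y_m]$, and the conditional Jensen inequality applied to the convex map $\|\cdot\|_{L^2}$ gives $\|\hat f_{A_n}-f\|_{L^2}\le E_f[\|P\bar f_n-f\|_{L^2}\mid y_1,\ldots,y_m]$. Integrating and combining with the previous bound yields the claim. The main obstacle I anticipate is the middle step: verifying that $A_n$ can indeed be chosen independently of $f$. This rests on the sufficiency of $(y_1,\ldots,y_m)$ for the parameter $f\in V$ in the Gaussian white noise model---the coordinates of $Y$ along $V^\perp$ are pure, independent noise when $f\in V$---combined with the fact that the GP posterior mean is a $Y$-linear functional whose coefficients in any basis are determined solely by the prior.
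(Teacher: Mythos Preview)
Your proposal is correct and follows a genuinely different, more conceptual route than the paper. Both arguments begin by projecting $\bar f_n$ onto $V=\mathrm{span}(f_1,\ldots,f_m)$ via Pythagoras. After that, the paper works in the prior's Karhunen--Lo\`eve basis $\{\phi_k\}$, decomposes each $\phi_k=\psi_k+g_k$ with $\psi_k\in V$ and $g_k\perp V$, and defines $A_n$ explicitly by $(A_n)_{ij}=c_n^2\sum_k a_k\psi_{k,i}\psi_{k,j}$; the risk bound then follows by expanding $\|f-P\bar f_n\|_{L^2}^2$ and using that $\langle Y,g_k\rangle_{L^2}$ is mean-zero and independent of $y$ under $P_f$ to drop cross terms. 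Your approach instead recognises $(y_1,\ldots,y_m)$ as sufficient for $f\in V$, identifies $A_n$ implicitly via $\hat f_{A_n}=E_f[P\bar f_n\mid y]$ (Rao--Blackwell), and concludes by conditional Jensen. The two constructions yield the same $A_n$ after a basis change. Your argument is cleaner and applies verbatim to any $Y$-linear estimator, not just GP posterior means; the paper's explicit computation has the advantage of delivering the \emph{squared}-norm inequality $E_f\|\bar f_n-f\|_{L^2}^2\ge E_f\|\hat f_{A_n}-f\|_{L^2}^2$, which is what is actually used downstream in \eqref{Eq:minimax_lb}. You can recover this stronger form by replacing the Jensen step with the conditional variance decomposition $E_f[\|P\bar f_n-f\|_{L^2}^2\mid y]=\|\hat f_{A_n}-f\|_{L^2}^2+E_f[\|P\bar f_n-\hat f_{A_n}\|_{L^2}^2\mid y]$.
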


\begin{proof}
Recall from \eqref{eq.post_mean} that the posterior mean takes the form $\bar{f}_n =\sum_{k=1}^\infty a_k Y_k \phi_k$, where $a_k = \frac{n\lambda_k}{1+n\lambda_k}$ with $(\lambda_k)$, $(\phi_k)$ the eigenvalues/vectors arising in the Karhunen-Lo\`eve expansion \eqref{eq:series}, and $Y_k = \langle Y,\phi_k \rangle_{L^2}$ defined in \eqref{eq:sequence_model}. We can decompose each $\phi_k$ as
$$
	\phi_k = \psi_k + g_k, 
	\qquad\qquad \psi_k = \sum_{i=1}^m \psi_{k,i}f_i\in\textnormal{span}\{f_1,\dots,f_m\}, 
	\qquad\qquad g_k\perp \{f_1,\dots,f_m\}
$$
using the orthogonality relation \eqref{Eq:cn_def}, giving
\begin{align*}
	\bar f_n
	&=
	\sum_{k=1}^\infty a_k \langle Y,\psi_k 
	+ g_k\rangle_{L^2}\psi_k
	+\sum_{k=1}^\infty a_k \langle Y,\psi_k 
	+ g_k\rangle_{L^2}g_k
	=:\bar f_1 + \bar f_2.
\end{align*}
Define the matrix $A_n \in \R^{m \times m}$ by
$$
	(A_n)_{ij} = c_n^2\sum_{k=1}^\infty a_k \psi_{k,i}\psi_{k,j}.
$$
Using that $\langle Y,f_j \rangle_{L^2} = c_n^2 y_j$ from \eqref{Eq:yi}, we can rewrite
\begin{align*}
\bar{f}_1 &= \sum_{k=1}^\infty a_k \left( \sum_{j=1}^m \psi_{k,j} \langle Y,f_j \rangle_{L^2}  + \langle Y,g_k \rangle_{L^2} \right) 
\sum_{i=1}^m \psi_{k,i} f_i \\
& = \sum_{i=1}^m \left( \sum_{j=1}^m \left[ \sum_{k=1}^\infty a_k \psi_{k,i} \psi_{k,j}  \right] c_n^2 y_j + \sum_{k=1}^\infty a_k \psi_{k,i} \langle Y,g \rangle_{L^2} \right) f_i \\
&= \sum_{i=1}^m \left( (A_ny)_i  + \sum_{k=1}^\infty a_k \psi_{k,i} \langle Y,g \rangle_{L^2}  \right) f_i.
\end{align*}
Since $\bar f_2\perp f$ for all $f\in\Fcal_n$, $\| f - \bar f_n\|^2_{L^2} = \| f - \bar f_1\|^2_{L^2} + \| \bar f_2\|^2_{L^2}\ge \| f - \bar f_1\|^2_{L^2}$. Writing $f = \sum_{i=1}^m \theta_i f_i$, where $\theta_i =\langle f,f_i\rangle_{L^2}/c_n^2$, and recalling $\langle f_i,f_j\rangle_{L^2} = \delta_{ij}c_n^2,$
\begin{align*}
	\| f - \bar f_1\|^2_{L^2}
	&= \left\| \sum_{i=1}^m \left(\theta_i - (A_ny)_i 
	 -  \sum_{k=1}^\infty a_k \psi_{k,i}
	 \langle Y,g_k\rangle_{L^2} \right) f_i \right\|^2_{L^2}\\
	 &= c_n^2 \sum_{i=1}^m \left(\theta_i - (A_ny)_i 
	 -  \sum_{k=1}^\infty a_k \psi_{k,i}
	 \langle Y,g_k\rangle_{L^2} \right)^2.
\end{align*}
Using the independence between $y_i = \langle Y, f_i\rangle_{L^2}/c_n^2$ and $ \langle Y,g_k\rangle_{L^2}$ for all $i=1,\dots,m$ and $k\ge1$, and that $E_f \langle Y,g_k \rangle_{L^2} = 0$, we obtain that for all $f\in \Fcal_n,$
\begin{align*}
	E_f\| f - \bar f_n\|^2_{L^2}
	&\ge
		c_n^2 E_f \sum_{i=1}^m \left\{ ( \theta_i - (A_ny)_i)^2 + \left( \sum_{k=1}^\infty a_k \psi_{k,i} \langle Y,g_k \rangle_{L^2} \right)^2  \right\} \\
	 &\geq
	 c_n^2\sum_{i=1}^m E_f\left[ \theta_i - (A_ny)_i\right]^2 = E_f\| f - \hat f_{A_n}\|^2_{L^2}.
\end{align*}
\end{proof}

Using Lemma \ref{Lem:GPLB} and \eqref{Eq:risk_seq}, we thus obtain 
\begin{align}\label{Eq:minimax_lb}
\inf_{\bar{f}_n} \max_{f \in \mathcal{F}_n} E_f \|\bar{f}_n-f\|_{L^2}^2 \geq c_n^2 \inf_{A \in \R^{m \times m}} \max_{\theta \in \Theta_m} |Ay-\theta|^2,
\end{align}
where the infimum on the left side is over all posterior means based on mean-zero Gaussian processes.
If thus suffices to lower bound the right-side, which is the minimax risk for \textit{linear} estimators in the one-sparse sequence model \eqref{Eq:EquivMod}.

%----------------------------------------------------------------
\subsection{Minimax risk for linear estimators in the one-sparse Gaussian sequence model}
\label{Sec:SeqModel}

We now study the minimax risk for linear estimators in the one-sparse model \eqref{Eq:EquivMod} for arbitrary noise level $\sigma_n>0$. Recall that $\theta = (\theta_1,\dots,\theta_m)^T$ has exactly one non-zero coordinate, which is equal to one, so that the parameter space is $\Theta_m = \{ e_1,\dots,e_m\}$ for $e_i$ the $i^{th}$ basis vector, i.e. $e_{ij} = \delta_{ij}$.

A \textit{linear} estimator of $\theta$ in model \eqref{Eq:EquivMod} takes the form
$$
	\hat \theta_A = A y,
$$
for some matrix $A\in\R^{m \times m}$ and $y = (y_1,\dots,y_m)^T$. We now show that in this one-sparse model, such estimators are dominated by \textit{diagonal homogeneous} linear estimators in terms of their maximal mean-squared error $E_\theta|\hat{\theta}_A-\theta|^2$, where $|\cdot|$ denotes the usual Euclidean norm on $\R^m$ and $E_\theta$ the expectation in model \eqref{Eq:EquivMod} with true parameter $\theta$. Hence the minimax risk for linear estimators is attained by a diagonal homogeneous linear estimator.

\begin{lemma}\label{Lem:DiagEst}%-------------------------------
Let $A = (a_{ij}) \in \R^{m\times m}$ be any matrix and let $\bar{a}^2 = \frac{1}{m} \sum_{j=1}^m a_{jj}^2$. Then
$$\max_{\theta \in \Theta_m}  E_\theta|\hat{\theta}_A-\theta|^2 \geq \max_{\theta \in \Theta_m} E_\theta |\hat{\theta}_{\bar{a}I_m}-\theta|^2.$$
In particular,
$$\inf_{A\in\R^{m\times m}} \max_{\theta \in \Theta_m}  E_\theta|\hat{\theta}_A-\theta|^2 = \inf_{a\in \R} \max_{\theta \in \Theta_m} E_\theta |\hat{\theta}_{aI_m} - \theta|^2.$$
\end{lemma}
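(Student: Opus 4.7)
The plan is to combine the bias--variance decomposition with an averaging over the coordinates $j\in\{1,\dots,m\}$, closed out by one application of Cauchy--Schwarz. For any linear estimator $\hat\theta_A = Ay$ and any $\theta=e_j$, the risk splits as
\[
R_j(A) := E_{e_j}|\hat\theta_A - e_j|^2 = \|Ae_j - e_j\|^2 + \sigma_n^2\|A\|_F^2,
\]
so only the bias depends on $j$, while the variance is a single global functional of $A$. For the homogeneous estimator $\bar{a}I_m$ the risk is constant in $\theta\in\Theta_m$, namely $(\bar{a}-1)^2 + m\sigma_n^2\bar{a}^2$. Throughout I take $\bar{a}=\sqrt{m^{-1}\sum_j a_{jj}^2}\ge 0$; this non-negative choice will be forced by the Cauchy--Schwarz step below.

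Since the maximum over $j$ dominates the arithmetic mean, it suffices to prove
\[
\frac{1}{m}\sum_{j=1}^m R_j(A) \;=\; \frac{1}{m}\|A - I_m\|_F^2 + \sigma_n^2\|A\|_F^2 \;\ge\; (\bar{a}-1)^2 + m\sigma_n^2 \bar{a}^2.
\]
Expanding $\|A-I_m\|_F^2 = \|A\|_F^2 - 2\,\mathrm{tr}(A) + m$, the inequality reduces to three elementary facts: (i) $\|A\|_F^2 \ge \sum_j a_{jj}^2 = m\bar{a}^2$, used once with weight $1/m$ and once with weight $\sigma_n^2$ to handle both Frobenius terms; (ii) $m^{-1}\mathrm{tr}(A)\le |m^{-1}\mathrm{tr}(A)| \le \sqrt{m^{-1}\sum_j a_{jj}^2} = \bar{a}$ by Cauchy--Schwarz (valid because $\bar{a}\ge 0$), so that $-\frac{2}{m}\mathrm{tr}(A)\ge -2\bar{a}$; and (iii) the constant $+1$ is common to both sides. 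Summing these three estimates yields the required bound, proving the first claim.

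The ``in particular'' statement is then immediate: for every $A\in\mathbb{R}^{m\times m}$, the first claim gives $\max_{\theta\in\Theta_m}E_\theta|\hat\theta_A-\theta|^2 \ge \inf_{a\in\mathbb{R}}\max_{\theta\in\Theta_m} E_\theta|\hat\theta_{aI_m}-\theta|^2$, so taking the infimum over $A$ produces one direction; the reverse inequality is trivial because $\{aI_m:a\in\mathbb{R}\}$ is a subfamily of $\mathbb{R}^{m\times m}$. The only subtlety I foresee is the sign of $\bar{a}$: the lemma only specifies $\bar{a}^2$, and without insisting on the non-negative root the cross term $-2\bar{a}$ would acquire the wrong sign and the Cauchy--Schwarz step would fail. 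No genuine obstacle is expected beyond this bookkeeping, since the one-sparse structure of $\Theta_m$ ensures that averaging across $j$ is lossless for lower-bounding the maximum risk.
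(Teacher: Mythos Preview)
Your proof is correct. Both you and the paper start from the same bias--variance decomposition, but the paths diverge from there. The paper proceeds in two stages: first it shows that replacing $A$ by $\tilde A=\operatorname{diag}(A)$ reduces the risk \emph{pointwise} for every $\theta\in\Theta_m$ (off-diagonal entries only inflate both bias and variance), and then it compares the diagonal estimator to $\bar a I_m$ via Jensen's inequality applied to the convex function $\varphi(x)=(\sqrt x-1)^2$, obtaining $(\bar a-1)^2\le \tfrac1m\sum_i(|a_{ii}|-1)^2\le \max_j(a_{jj}-1)^2$. You instead bypass the diagonal reduction entirely: averaging the risk over $j$ collapses the bias to $\tfrac1m\|A-I_m\|_F^2$, and then a single Cauchy--Schwarz (QM--AM) bound $|\tfrac1m\operatorname{tr}A|\le\bar a$ handles the cross term. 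Your route is shorter and more elementary for the stated inequality; the paper's route yields the extra intermediate fact that diagonal estimators dominate general ones for each $\theta$, which is not needed here but is conceptually informative. Your remark about fixing $\bar a\ge 0$ is well-taken and is implicitly used in the paper's Jensen step as well (it works with $|a_{ii}|$).
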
%----------------------------------------------------

\begin{proof}
Using the bias-variance decomposition,
\begin{align*}
	E_\theta |\hat\theta_A - \theta|^2
	&=
		|E_\theta \hat\theta_A - \theta|^2 + \textnormal{tr}[\Cov_\theta(\hat\theta_A)]\\
	&=
		|(A-I_m)\theta|^2 + \textnormal{tr}[A\Cov_\theta(y)A^T]\\
	&=
		|(A - I_m)\theta|^2 + \sigma_n^2 \textnormal{tr}[A A^T]\\
	&=
		\sum_{i=1}^m \left( \sum_{j=1}^m (a_{ij} - \delta_{ij})\theta_j \right)^2
		+ \sigma_n^2\sum_{i,j=1}^m  a_{ij}^2,
\end{align*}
since $\Cov_\theta(y) = \Cov_\theta(w) = \sigma_n^2 I_m$. Since $\theta \in \Theta_m=\{e_1,\dots,e_m\}$, let $j^*\in\{1,\dots,m\}$ be the index such that $\theta = e_{j^*}$. Then
\begin{equation*}
\label{Eq:VarianceBias}
	E_\theta |\hat\theta_A - \theta|^2
	= \sum_{i=1}^m (a_{ij^*}-\delta_{ij^*})^2 + \sigma_n^2 \sum_{i,j=1}^m  a_{ij}^2.
\end{equation*}
However, applying this last expression also with $\tilde{A} = \text{diag}(A)$, the diagonal matrix with entries $\tilde{a}_{ij} = a_{ij} \delta_{ij}$, gives
\begin{equation*}
\label{Eq:VarianceBias2}
	E_\theta |\hat\theta_{\tilde{A}} - \theta|^2
	= (a_{j^* j^*} - 1)^2 + \sigma_n^2\sum_{i=1}^m a_{ii}^2 \leq E_\theta |\hat{\theta}_A-\theta|^2,
\end{equation*}
a bound which holds for all $\theta \in \Theta_m$. Thus we need only consider the estimator $\hat{\theta}_{\tilde{A}}$ with diagonal matrix $\tilde{A}$. Using the last display,
$$\max_{\theta \in \Theta_m} E_\theta|\hat{\theta}_{\tilde{A}} - \theta|^2 = \max_{j=1,\dots,m} (a_{jj}-1)^2 + \sigma_n^2 m \bar{a}^2.$$
Since the matrix $\bar{a}I_m$ is also diagonal, this further yields
$$\max_{\theta \in \Theta_m} E_\theta|\hat{\theta}_{\bar{a} I_m} - \theta|^2 = (\bar{a}-1)^2 + \sigma_n^2 m \bar{a}^2,$$
so that it is enough to show $(\bar{a}-1)^2 \leq \max_j (a_{jj}-1)^2$. Since the function $\varphi(x) = (\sqrt{x}-1)^2$ is convex on $(0,\infty)$, Jensen's inequality implies
$$(\bar{a}-1)^2 = \left( \sqrt{\frac{1}{m} \sum_{i=1}^m a_{ii}^2 } -1 \right)^2 \leq \frac{1}{m} \sum_{i=1}^m (|a_{ii}|-1)^2 \leq \max_{j=1,\dots,m} (a_{jj}-1)^2$$
as desired.
\end{proof}

The last lemma immediately gives the minimax risk for linear estimators in this model.

\begin{lemma}[Linear minimax risk in the one-sparse model]\label{Lem:linear_minimax}
Consider model \eqref{Eq:EquivMod}. For $\hat{\theta}_A = Ay$, we have
$$\inf_{A\in\R^{m\times m}} \max_{\theta \in \Theta_m}  E_\theta|\hat{\theta}_A-\theta|^2 = \frac{m\sigma_n^2}{1+m\sigma_n^2}. $$
\end{lemma}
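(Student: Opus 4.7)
The plan is to combine Lemma \ref{Lem:DiagEst}, which reduces the optimization to diagonal homogeneous matrices $A = aI_m$ with $a \in \R$, with a direct one-dimensional minimization of the resulting quadratic in $a$.

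First I would apply Lemma \ref{Lem:DiagEst} to write
$$\inf_{A\in\R^{m\times m}} \max_{\theta \in \Theta_m}  E_\theta|\hat{\theta}_A-\theta|^2 = \inf_{a\in \R} \max_{\theta \in \Theta_m} E_\theta |\hat{\theta}_{aI_m} - \theta|^2.$$
For $A = aI_m$, one computes (as already shown in the proof of Lemma \ref{Lem:DiagEst}) that for every $\theta \in \Theta_m$,
$$E_\theta|\hat{\theta}_{aI_m}-\theta|^2 = (a-1)^2 + m\sigma_n^2 a^2,$$
since the bias contributes $(a-1)^2$ from the single nonzero coordinate and the variance contributes $\sigma_n^2 \sum_{i=1}^m a^2 = m\sigma_n^2 a^2$. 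Crucially, this expression is independent of which $e_j$ is the true parameter, so the maximum over $\Theta_m$ equals this same expression.

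Next I would minimize $\varphi(a) := (a-1)^2 + m\sigma_n^2 a^2$ over $a \in \R$. Setting $\varphi'(a) = 2(a-1) + 2m\sigma_n^2 a = 0$ gives the unique minimizer $a^* = 1/(1+m\sigma_n^2)$. Substituting back,
$$\varphi(a^*) = \left(\frac{m\sigma_n^2}{1+m\sigma_n^2}\right)^2 + \frac{m\sigma_n^2}{(1+m\sigma_n^2)^2} = \frac{m\sigma_n^2(m\sigma_n^2+1)}{(1+m\sigma_n^2)^2} = \frac{m\sigma_n^2}{1+m\sigma_n^2},$$
which is the claimed value.

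There is no real obstacle here; all the substance has been done in Lemma \ref{Lem:DiagEst}. The only thing to be careful about is to observe that for the one-parameter family $\hat\theta_{aI_m}$ the risk does not depend on $\theta \in \Theta_m$, so that the inner maximum collapses and the outer infimum becomes an elementary calculus problem with a closed-form solution.
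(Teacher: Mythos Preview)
Your proposal is correct and follows essentially the same approach as the paper: invoke Lemma \ref{Lem:DiagEst} to reduce to the one-parameter family $\hat\theta_{aI_m}$, observe that the risk $(a-1)^2 + m\sigma_n^2 a^2$ is the same for every $\theta\in\Theta_m$, and minimize over $a$ to obtain $a^* = 1/(1+m\sigma_n^2)$ with minimal value $m\sigma_n^2/(1+m\sigma_n^2)$.
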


\begin{proof}
Lemma \ref{Lem:DiagEst} implies that a linear estimator of $\theta$ with minimal maximal risk over $\Theta_m$ necessarily has the form $\hat{\theta}_{aI_m} = a y$ for some $a \in \R$. For such an estimator and any $\theta \in \Theta_m$, the bias-variance decomposition gives
$$
	E_\theta |\hat\theta_{aI_m} - \theta|^2 
	= (a - 1)^2 + m\sigma_n^2a^2,
$$
which can be explicitly minimized at $a^* = \frac{1}{1+m\sigma_n^2}$ with corresponding minimal risk
\begin{align*}
\label{Eq:MinMaxRisk}
	E_\theta |\hat\theta_{a^*I_m} - \theta|^2 
	= \frac{(m\sigma_n^2)^2}{(1+m\sigma_n^2)^2}
	+ \frac{m\sigma_n^2}{(1+m\sigma_n^2)^2}
	=\frac{m\sigma_n^2}{1+m\sigma_n^2}.
\end{align*}
\end{proof}

\subsection{Proof of Theorems \ref{thm:lb_contraction} and \ref{thm:lb_mean}}

We do not keep explicit track of constants in this version of the proof, noting simply they will finally depend only on the dimension $d$.

\begin{proof}[Proof of Theorems \ref{thm:lb_contraction} and \ref{thm:lb_mean}]
Let $r_d = \frac{1}{2(d+2)!}$, $k = \lceil (r_d n)^\frac{1}{2d+2}\rceil$ and $m = k^d$. By Lemma \ref{Lem:fcns_general}, there exist orthogonal function $f_1,\dots,f_m\in \mathcal{G}$ with $\|f_j\|_{L^2}^2 = r_d m^{-\frac{2+d}{d}}$. Set $\mathcal{F}_n = \{f_1,\dots,f_n\}$ so that \eqref{Eq:cn_def} is satisfied with $c_n^2= r_d m^{-\frac{2+d}{d}}$. Using \eqref{Eq:minimax_lb} and Lemma \ref{Lem:linear_minimax},
\begin{align*}
\sup_{f_0\in \mathcal{G}} E_f\|\bar{f}_n-f\|_{L^2}^2 \geq \max_{f \in \mathcal{F}_n} E_f \|\bar{f}_n-f\|_{L^2}^2 \geq c_n^2 \frac{m\sigma_n^2}{1+m\sigma_n^2} = \frac{m}{n} \frac{1}{1+m/(nc_n^2)}
\end{align*}
since $\sigma_n^2 = \frac{1}{nc_n^2}$ by \eqref{Eq:theta_def}. Since $m=m_n$ satisfies $\frac{m}{c_n^2 n} \simeq n^{-1+1/d} \lesssim 1$, we deduce 
$$\max_{f \in \mathcal{F}_n} E_f \|\bar{f}_n-f\|_{L^2}^2 \gtrsim \frac{m}{n} \simeq n^{-\frac{2+d}{2+2d}},$$
which proves Theorem \ref{thm:lb_mean}. Theorem \ref{thm:lb_contraction} then follows by applying Lemma \ref{Lem:transfer_risk} with $\mathcal{F}_n = \{f_1,\ldots,f_m\} \subset \mathcal{G}$ and $\gamma_n^2  \simeq n^{-\frac{2+d}{2+2d}}$.
\end{proof}

\section{Background material}\label{sec:preliminaries}

\subsection{Minimax estimation}
\label{sec:minimax}

Describing the large sample behavior of the smallest achievable worst case risk over all estimation procedures, minimax (estimation) rates are a standard tool to establish statistical optimality of a method. For a statistical model $(P_\theta^n:\theta\in \Theta)$ with sample size $n$ and a loss function $\ell,$ the minimax risk is $R_n=\inf_{\widehat{\theta}_n}\sup_{\theta\in \Theta}E_\theta[\ell(\widehat{\theta}_n,\theta)],$ where the infimum is taken over all estimators $\widehat \theta_n.$ The minimax rate is any sequence $(r_n)_n$ such that $r_n \asymp R_n.$ Any estimator $\widetilde \theta_n$ with $R_n \asymp \sup_{\theta\in \Theta}E_\theta[\ell(\widetilde{\theta}_n,\theta)]$ is called minimax rate optimal. By definition, the risk of minimax rate optimal estimators is at most a constant factor larger than the minimax risk $R_n.$ 

If the posterior contraction rate is $\varepsilon_n,$ then under very weak assumptions one can find an estimator with worst case risk of the order $\varepsilon_n,$ see Theorem 2.5 \cite{MR1790007}. This in turn implies that the posterior cannot contract faster than the minimax rate.

\subsection{Sequence representation and posterior mean}\label{sec:seq_represent}
We provide here some explanation behind the sequence representation \eqref{eq:sequence_model} of the Gaussian white noise model \eqref{Eq:WhiteNoise} and the derivation of the posterior distribution \eqref{Eq:posterior}. Recall that we can realize a random element $f\sim \Pi$ distributed according to a GP prior on $L^2[0,1]^d$ via its series expansion \eqref{eq:series}, namely
\begin{equation*}
	f = \sum_{k=1}^\infty \sqrt{\lambda_k}\xi_k\phi_k,
	\qquad \xi_k\sim^{iid} N(0,1),
\end{equation*}
known as the \textit{Karhunen-Lo\`eve expansion}.

The Brownian motion $W$ in \eqref{Eq:WhiteNoise} can equivalently be viewed via the action of integration on test functions $g\in L^2[0,1]^d$ through $W_g = \int_{[0,1]^d} g(x) dW_x$, leading to the mean-zero Gaussian process $W = (W_g:g \in L^2[0,1]^d)$ indexed by $L^2[0,1]^d$ with covariance $E(W_g W_h) = \langle g,h\rangle_{L^2}$. In this form, the Gaussian white noise model \eqref{Eq:WhiteNoise} can be interpreted as observing the Gaussian process $Y = (Y_g:g \in L^2[0,1]^d)$ with
$$Y_g = \langle f,g\rangle_{L^2} + n^{-1/2} W_g,\qquad \qquad g\in L^2[0,1]^d.$$
It is then statistically equivalent to observe the subprocess $(Y_k = Y_{\phi_k}:k\geq 1)$ for any orthonormal basis $\{\phi_k:k\geq 1\}$ of $L^2[0,1]^d$, in particular the basis corresponding to the Karhunen-Lo\`eve expansion of the prior. The white noise model \eqref{Eq:WhiteNoise} is thus equivalent to observing
\begin{equation*}
Y_k = \int_{[0,1]^d} \phi_k(x) dY_x = \langle \phi_k,f\rangle_{L^2} + \frac{1}{\sqrt{n}} \int_{[0,1]^d} \phi_k(x) dW_x =: \theta_k + \frac{ w_k}{\sqrt{n}},
\end{equation*}
for $k=1,2,3,\dots$ and where $ w_k \sim^{iid} N(0,\|\phi_k\|_{L^2}^2)=N(0,1)$. The observations $(Y_k:k\geq 1)$ in the last equation and the original white noise model \eqref{Eq:WhiteNoise} are equivalent in the sense that each can be perfectly recovered from the other as we now explain. One can clearly obtain $(Y_k)$ as in the last display from the whole trajectory $(Y_x:x\in[0,1]^d)$ by simply computing the integrals $(\int \phi_k(x) dY_x:k\geq 1)$. Conversely, suppose one observes $(Y_k:k \geq 1)$. For any $g\in L^2[0,1]^d$, one can recover $Y_g$ in the second last display using the basis expansion $g = \sum_k \langle g,\phi_k \rangle_{L^2} \phi_k$ via
$$Y_g = \int g(x) dY_x = \int \sum_{k=1}^\infty \langle g,\phi_k \rangle_{L^2} \phi_k(x) dY_x = \sum_{k=1}^\infty \langle g,\phi_k \rangle_{L^2} Y_k.$$
Since this holds for arbitrary $g\in L^2[0,1]^d$, one can reconstruct the process $(Y_g: g\in L^2[0,1]^d)$ and thus the whole trajectory $(Y_x: x\in [0,1]^d)$ as in \eqref{Eq:WhiteNoise}. This shows these two representations are equivalent, and thus we may consider either as our `data'.

Viewing the prior $\Pi$ through its series expansion \eqref{eq:series}, $\Pi$ can be viewed as a prior on the space of coefficients in the basis expansion of $\{\phi_k\}$ leading to the prior distribution $f=(\theta_k)_k \sim \otimes_{k=1}^\infty N(0,\lambda_k)$. Denoting by $P_f = P_f^n$ the distribution of the sequence representation \eqref{eq:sequence_model}, we have
$$P_f = \otimes_{k=1}^\infty N(\theta_k,1/n).$$
Using Kakutani's product martingale theorem (\cite{DP06}, Theorem 2.7), the measures $(P_f: (\theta_k)_k \in \ell_2)$ are absolutely continuous with respect to $\otimes_{k=1}^\infty N(0,1/n)$ with density
\begin{equation*}
e^{\ell_n(f)} = \frac{dP_f}{dP_0} =\exp \left( \sqrt{n} \sum_{k=1}^\infty \theta_k Y_k - \frac{n}{2}\sum_{k=1}^\infty \theta_k^2 \right) = \exp \left(  \sqrt{n} \langle f , Y \rangle_{L^2} - \frac{n}{2} \|f\|_{L^2}^2 \right),
\end{equation*}
so that $e^{\ell_n(f)}$ and $\ell_n(f)$ are the likelihood and log-likelihood of the model, respectively. Note that the likelihood is invariant to the choice of basis $\{\phi_k\}$, but the particular choice can (and will) provide a convenient representation.

Since the likelihood factorizes in terms of the coefficients $(\theta_k)_k$, a prior that makes the $(\theta_k)_k$ independent as in \eqref{eq:series} will yield similar independence in the posterior. The posterior distribution is therefore conjugate and takes the form
\begin{equation*}
f = \sum_{k=1}^\infty \theta_k \phi_k, \qquad \qquad \theta_k|Y_k \sim^{ind} N \left( \frac{n\lambda_k}{n\lambda_k+1}Y_k, \frac{\lambda_k}{n\lambda_k+1} \right),
\end{equation*}
where the exact form follows from standard one-dimensional conjugate computations for the normal likelihood $Y_k |\theta_k = N(\theta_k,1/n)$ with normal prior $\theta_k \sim N(0,\lambda_k)$. This gives the form of the posterior \eqref{Eq:posterior} and its mean \eqref{eq.post_mean}.

\bibliography{References}{}

\begin{thebibliography}{10}

\bibitem{AW21}
{\sc Agapiou, S., and Wang, S.}
\newblock Laplace priors and spatial inhomogeneity in {B}ayesian inverse
  problems, 2021.

\bibitem{AA20}
{\sc Aykroyd, R.~G., and Aljohani, H.}
\newblock A {B}ayesian approach to wavelet-based modelling of discontinuous
  functions applied to inverse problems.
\newblock {\em Comm. Statist. Simulation Comput. 49}, 1 (2020), 207--225.

\bibitem{BD06}
{\sc Bioucas-Dias, J.~M.}
\newblock Bayesian wavelet-based image deconvolution: a {GEM} algorithm
  exploiting a class of heavy-tailed priors.
\newblock {\em IEEE Trans. Image Process. 15}, 4 (2006), 937--951.

\bibitem{C08}
{\sc Castillo, I.}
\newblock Lower bounds for posterior rates with {G}aussian process priors.
\newblock {\em Electron. J. Stat. 2\/} (2008), 1281--1299.

\bibitem{CR15}
{\sc Castillo, I., and Rousseau, J.}
\newblock A {B}ernstein--von {M}ises theorem for smooth functionals in
  semiparametric models.
\newblock {\em Ann. Statist. 43}, 6 (2015), 2353--2383.

\bibitem{CA20}
{\sc Ch{\'e}rief-Abdellatif, B.-E.}
\newblock Convergence rates of variational inference in sparse deep learning.
\newblock In {\em Proceedings of the 37th International Conference on Machine
  Learning\/} (13--18 Jul 2020), vol.~119 of {\em Proceedings of Machine
  Learning Research}, PMLR, pp.~1831--1842.

\bibitem{CDV93}
{\sc Cohen, A., Daubechies, I., and Vial, P.}
\newblock Wavelets on the interval and fast wavelet transforms.
\newblock {\em Appl. Comput. Harmon. Anal. 1}, 1 (1993), 54--81.

\bibitem{DP06}
{\sc Da~Prato, G.}
\newblock {\em An introduction to infinite-dimensional analysis}.
\newblock Universitext. Springer-Verlag, Berlin, 2006.
\newblock Revised and extended from the 2001 original by Da Prato.

\bibitem{DL13}
{\sc Damianou, A., and Lawrence, N.~D.}
\newblock Deep {G}aussian processes.
\newblock In {\em Proceedings of the Sixteenth International Conference on
  Artificial Intelligence and Statistics\/} (29 Apr--01 May 2013), vol.~31 of
  {\em Proceedings of Machine Learning Research}, PMLR, pp.~207--215.

\bibitem{DJ98}
{\sc Donoho, D.~L., and Johnstone, I.~M.}
\newblock Minimax estimation via wavelet shrinkage.
\newblock {\em Ann. Statist. 26}, 3 (1998), 879--921.

\bibitem{FSH21}
{\sc Finocchio, G., and Schmidt-Hieber, J.}
\newblock Posterior contraction for deep {G}aussian process priors, 2021.

\bibitem{F99}
{\sc Freedman, D.}
\newblock On the {B}ernstein-von {M}ises theorem with infinite-dimensional
  parameters.
\newblock {\em Ann. Statist. 27}, 4 (1999), 1119--1140.

\bibitem{MR1790007}
{\sc Ghosal, S., Ghosh, J.~K., and van~der Vaart, A.~W.}
\newblock Convergence rates of posterior distributions.
\newblock {\em Ann. Statist. 28}, 2 (2000), 500--531.

\bibitem{GvdV17}
{\sc Ghosal, S., and van~der Vaart, A.~W.}
\newblock {\em Fundamentals of Nonparametric Bayesian Inference}.
\newblock Cambridge University Press, New York, 2017.

\bibitem{GN16}
{\sc Gin\'{e}, E., and Nickl, R.}
\newblock {\em Mathematical foundations of infinite-dimensional statistical
  models}.
\newblock Cambridge Series in Statistical and Probabilistic Mathematics, [40].
  Cambridge University Press, New York, 2016.

\bibitem{MR1082147}
{\sc Hastie, T.~J., and Tibshirani, R.~J.}
\newblock {\em Generalized additive models}, vol.~43 of {\em Monographs on
  Statistics and Applied Probability}.
\newblock Chapman and Hall, Ltd., London, 1990.

\bibitem{HS20}
{\sc Hayakawa, S., and Suzuki, T.}
\newblock On the minimax optimality and superiority of deep neural network
  learning over sparse parameter spaces.
\newblock {\em Neural Networks 123\/} (2020), 343--361.

\bibitem{IF19}
{\sc Imaizumi, M., and Fukumizu, K.}
\newblock Deep neural networks learn non-smooth functions effectively.
\newblock In {\em The 22nd International Conference on Artificial Intelligence
  and Statistics, {AISTATS} 2019\/} (2019), vol.~89 of {\em Proceedings of
  Machine Learning Research}, {PMLR}, pp.~869--878.

\bibitem{IF20}
{\sc Imaizumi, M., and Fukumizu, K.}
\newblock Advantage of deep neural networks for estimating functions with
  singularity on hypersurfaces, 2020.

\bibitem{JDM17}
{\sc Jacobsen, R.~D., and M\o~ller, J.}
\newblock Frequentist and {B}ayesian inference for {G}aussian-log-{G}aussian
  wavelet trees and statistical signal processing applications.
\newblock {\em Stat 6\/} (2017), 248--256.

\bibitem{JohnstoneBook}
{\sc Johnstone, I.}
\newblock Gaussian estimation: Sequence and wavelet models.
\newblock available from \url{https://imjohnstone.su.domains/GE_09_16_19.pdf},
  2019.

\bibitem{J10}
{\sc Johnstone, I.~M.}
\newblock High dimensional {B}ernstein--von {M}ises: simple examples.
\newblock In {\em Borrowing strength: theory powering applications---a
  {F}estschrift for {L}awrence {D}. {B}rown}, vol.~6 of {\em Inst. Math. Stat.
  (IMS) Collect.} Inst. Math. Statist., Beachwood, OH, 2010, pp.~87--98.

\bibitem{KT93}
{\sc Korostel\"{e}v, A.~P., and Tsybakov, A.~B.}
\newblock {\em Minimax theory of image reconstruction}, vol.~82 of {\em Lecture
  Notes in Statistics}.
\newblock Springer-Verlag, New York, 1993.

\bibitem{LM00}
{\sc Laurent, B., and Massart, P.}
\newblock Adaptive estimation of a quadratic functional by model selection.
\newblock {\em Ann. Statist. 28}, 5 (2000), 1302--1338.

\bibitem{MV99}
{\sc M\"{u}ller, P., and Vidakovic, B.}, Eds.
\newblock {\em Bayesian inference in wavelet-based models}, vol.~141 of {\em
  Lecture Notes in Statistics}.
\newblock Springer-Verlag, New York, 1999.

\bibitem{N96}
{\sc Neal, R.~M.}
\newblock {\em Bayesian Learning for Neural Networks}.
\newblock Springer-Verlag, Berlin, Heidelberg, 1996.

\bibitem{PETERSEN2018296}
{\sc Petersen, P., and Voigtlaender, F.}
\newblock Optimal approximation of piecewise smooth functions using deep {ReLU}
  neural networks.
\newblock {\em Neural Networks 108\/} (2018), 296--330.

\bibitem{PR18}
{\sc Polson, N.~G., and Ro{\v{c}}kov{\'a}, V.}
\newblock Posterior concentration for sparse deep learning.
\newblock In {\em Advances in Neural Information Processing Systems\/} (2018),
  pp.~930--941.

\bibitem{RW06}
{\sc Rasmussen, C.~E., and Williams, C. K.~I.}
\newblock {\em Gaussian processes for machine learning}.
\newblock Adaptive Computation and Machine Learning. MIT Press, Cambridge, MA,
  2006.

\bibitem{RS20}
{\sc Ray, K., Szabo, B., and Clara, G.}
\newblock Spike and slab variational {B}ayes for high dimensional logistic
  regression.
\newblock In {\em Advances in Neural Information Processing Systems\/} (2020),
  vol.~33, Curran Associates, Inc., pp.~14423--14434.

\bibitem{RS21}
{\sc Ray, K., and Szabó, B.}
\newblock Variational {B}ayes for high-dimensional linear regression with
  sparse priors.
\newblock {\em Journal of the American Statistical Association 0}, 0 (2021),
  1--12.

\bibitem{RvdV20}
{\sc Ray, K., and van~der Vaart, A.}
\newblock Semiparametric {B}ayesian causal inference.
\newblock {\em Ann. Statist. 48}, 5 (2020), 2999--3020.

\bibitem{R08}
{\sc Rei\ss, M.}
\newblock Asymptotic equivalence for nonparametric regression with multivariate
  and random design.
\newblock {\em Ann. Statist. 36}, 4 (2008), 1957--1982.

\bibitem{RR21}
{\sc Rockova, V., and Rousseau, J.}
\newblock Ideal {B}ayesian spatial adaptation, 2021.

\bibitem{SH20}
{\sc Schmidt-Hieber, J.}
\newblock Nonparametric regression using deep neural networks with {R}e{LU}
  activation function.
\newblock {\em Ann. Statist. 48}, 4 (2020), 1875--1897.

\bibitem{SCHMIDTHIEBER2021119}
{\sc Schmidt-Hieber, J.}
\newblock The {K}olmogorov–{A}rnold representation theorem revisited.
\newblock {\em Neural Networks 137\/} (2021), 119--126.

\bibitem{S19}
{\sc Suzuki, T.}
\newblock Adaptivity of deep {ReLU} network for learning in {B}esov and mixed
  smooth {B}esov spaces: optimal rate and curse of dimensionality.
\newblock In {\em 7th International Conference on Learning Representations,
  {ICLR} 2019, New Orleans, LA, USA, May 6-9, 2019\/} (2019), OpenReview.net.

\bibitem{SN21}
{\sc Suzuki, T., and Nitanda, A.}
\newblock Deep learning is adaptive to intrinsic dimensionality of model
  smoothness in anisotropic {B}esov space.
\newblock In {\em Advances in Neural Information Processing Systems\/} (2021),
  vol.~34, Curran Associates, Inc., pp.~3609--3621.

\bibitem{TMNMZ14}
{\sc Tang, J., Meng, Z., Nguyen, X., Mei, Q., and Zhang, M.}
\newblock Understanding the limiting factors of topic modeling via posterior
  contraction analysis.
\newblock In {\em Proceedings of the 31st International Conference on Machine
  Learning\/} (Bejing, China, 22--24 Jun 2014), vol.~32 of {\em Proceedings of
  Machine Learning Research}, PMLR, pp.~190--198.

\bibitem{telgarsky16}
{\sc Telgarsky, M.}
\newblock Benefits of depth in neural networks.
\newblock In {\em 29th Annual Conference on Learning Theory\/} (Columbia
  University, New York, New York, USA, 23--26 Jun 2016), vol.~49 of {\em
  Proceedings of Machine Learning Research}, PMLR, pp.~1517--1539.

\bibitem{TS21}
{\sc Tsuji, K., and Suzuki, T.}
\newblock {Estimation error analysis of deep learning on the regression problem
  on the variable exponent Besov space}.
\newblock {\em Electronic Journal of Statistics 15}, 1 (2021), 1869 -- 1908.

\bibitem{vdV98}
{\sc van~der Vaart, A.~W.}
\newblock {\em Asymptotic statistics}, vol.~3 of {\em Cambridge Series in
  Statistical and Probabilistic Mathematics}.
\newblock Cambridge University Press, Cambridge, 1998.

\bibitem{vdVvZ08}
{\sc van~der Vaart, A.~W., and van Zanten, J.~H.}
\newblock Rates of contraction of posterior distributions based on {G}aussian
  process priors.
\newblock {\em Ann. Statist. 36}, 3 (2008), 1435--1463.

\bibitem{VEtAl09}
{\sc V\"{a}nsk\"{a}, S., Lassas, M., and Siltanen, S.}
\newblock Statistical {X}-ray tomography using empirical {B}esov priors.
\newblock {\em Int. J. Tomogr. Stat. 11}, S09 (2009), 3--32.

\bibitem{WB19b}
{\sc Wang, Y., and Blei, D.}
\newblock Variational {B}ayes under model misspecification.
\newblock In {\em Advances in Neural Information Processing Systems 32}. 2019,
  pp.~13357--13367.

\bibitem{WB19}
{\sc Wang, Y., and Blei, D.~M.}
\newblock Frequentist consistency of variational {B}ayes.
\newblock {\em J. Amer. Statist. Assoc. 114}, 527 (2019), 1147--1161.

\bibitem{YAROTSKY2017103}
{\sc Yarotsky, D.}
\newblock Error bounds for approximations with deep {ReLU} networks.
\newblock {\em Neural Networks 94\/} (2017), 103--114.

\bibitem{pmlryarotsky18a}
{\sc Yarotsky, D.}
\newblock Optimal approximation of continuous functions by very deep {ReLU}
  networks.
\newblock In {\em Proceedings of the 31st Conference On Learning Theory\/}
  (06--09 Jul 2018), vol.~75 of {\em Proceedings of Machine Learning Research},
  PMLR, pp.~639--649.

\end{thebibliography}
\bibliographystyle{acm}

\end{document}